\DeclareMathOperator*{\argmin}{arg\,min}
\newcommand{\bA}{\mathbf{A}}
\newcommand{\be}{\mathbf{e}}
\newcommand{\by}{\mathbf{y}}
\newcommand{\bF}{\mathbf{F}}
\newcommand{\bM}{\mathbf{M}}
\newcommand{\bI}{\mathbf{I}}
\newcommand{\bG}{\mathbf{G}}
\newcommand{\bE}{\mathbf{E}}
\newcommand{\bB}{\mathbf{B}}
\newcommand{\bX}{\mathbf{X}}
\newcommand{\bY}{\mathbf{Y}}
\newcommand{\bU}{\mathbf{U}}
\newcommand{\bV}{\mathbf{V}}
\newcommand{\bW}{\mathbf{W}}
\newcommand{\bZ}{\mathbf{Z}}
\newcommand{\bR}{\mathbf{R}}
\newcommand{\bP}{\mathbf{P}}
\newcommand{\bQ}{\mathbf{Q}}
\newcommand{\bT}{\mathbf{T}}
\newcommand{\bS}{\mathbf{S}}
\newcommand{\bH}{\mathbf{H}}
\newcommand{\bone}{\mathbf{1}}
\newcommand{\bepsilon}{\boldsymbol{\epsilon}}
\newcommand{\bSigma}{\boldsymbol{\Sigma}}
\newcommand{\bzero}{\mathbf{0}}
\def\trans{^{\rm T}}
\def\wh{\widehat}
\def\wt{\widetilde}
\newcommand{\norm}[1]{\Vert#1\Vert}
\newcommand{\Norm}[1]{\left\Vert#1\right\Vert}
\newcommand{\Abs}[1]{\left\vert#1\right\vert}
\newtheorem{remark}{Remark}
\newtheorem{assumption}{Assumption}
\newtheorem{theorem}{Theorem}
\newtheorem{lemma}{Lemma}
\newtheorem{definition}{Definition}
\title{Transductive Matrix Completion with Calibration for Multi-Task Learning\thanks{An abridged version of this paper will appear in the 2023 IEEE International Conference on Acoustics, Speech, and Signal Processing (ICASSP 2023).\\
E-mail addresses: hengfang@fjnu.edu.cn, yasminzhang@ucla.edu, maoxj@sjtu.edu.cn, wangzl@xmu.edu.cn.}}
\author[1]{\normalsize Hengfang Wang}
\author[2]{Yasi Zhang}
\author[3]{Xiaojun Mao}
\author[4]{Zhonglei Wang}
\affil[1]{School of Mathematics and Statistics, Fujian Normal University, China}
\affil[2]{Department of Statistics, University of California, Los Angeles, USA}
\affil[3]{School of Mathematical Sciences, Ministry of Education Key Laboratory of Scientific and Engineering Computing, Shanghai Jiao Tong University, China}
\affil[4]{Wang Yanan Institute for Studies in Economics and School of Economics, Xiamen University, China}
\date{}
\begin{document}
\maketitle

\begin{abstract}
    Multi-task learning has attracted much attention due to growing multi-purpose research with multiple related data sources. Moreover, transduction with matrix completion is a useful method in multi-label learning. In this paper, we propose a transductive matrix completion algorithm that incorporates a calibration constraint for the features under the multi-task learning framework. The proposed algorithm recovers the incomplete feature matrix and target matrix simultaneously. Fortunately, the calibration information improves the completion results. In particular, we provide a statistical guarantee for the proposed algorithm, and the theoretical improvement induced by calibration information is also studied. Moreover, the proposed algorithm enjoys a sub-linear convergence rate. Several synthetic data experiments are conducted, which show the proposed algorithm out-performs other existing methods, especially when the target matrix is associated with the feature matrix in a nonlinear way.
  \end{abstract}
\section{Introduction}
With the advent of the big data era, massive amounts of information and data have been collected by modern high-tech devices, including web servers, environmental sensors, x-ray imaging machines, and so on. Based on multiple data sources related to learning purposes,  algorithms have been developed to achieve various learning goals. Multi-task learning (MTL) \cite{caruana1997multitask}, for example, implements a robust learner for multiple tasks incorporating multiple sources, and it can be widely used in practice, including web search, medical diagnosis, natural language processing, and computer version.

MTL has advantages for analyzing the multi-response structure. For $i = 1, \ldots, n$, let $\by_{i} \in \mathbb{R}^{m}$ be the  vector of interest of length $m$, where $n$ is the number of instances, and each element corresponds to a particular task.  Denote $\bY = (y_{ij})$ to be the $n\times m$ response matrix, consisting of the $n$ realizations of $\by_{i}$. If the responses for a specific task are categorical, it is essentially a multi-label learning problem. As for continuous responses, it turns out to be a multi-response regression problem if additional features are available. As the exponential family is flexible to handle different data types, regardless of categorical or continuous, for each task, we assume that $y_{ij}$ 
independently comes from an exponential family distribution with parameter $z_{\star, ij}$, which forms $\bZ_{\star} = (z_{\star, ij})$. The goal is to learn the underlying $\bZ_{\star}$ based on $\bY$. Moreover,  entries of $\bY$ may suffer from missingness which makes it  difficult  to learn $\bZ_{\star}$. To tackle this difficulty and estimate $\bZ_{\star}$ simultaneously, matrix completion (MC) \cite{recht2011simpler, keshavan2009matrix, koltchinskii2011nuclear,negahban2012restricted,cai2013max} algorithms have been developed under the low-rank and other regularity assumptions.   
 The recovery of the target matrix without any additional features is studied in \cite{alaya2019collective}.  Noisy MC is investigated in \cite{fan2019generalized,robin2019main} from the exponential family with  fully observed features.

In MTL problems,  features related to the responses may exist, but it is inevitable that such features  also suffer from missingness.
Goldberg et al. (2010) \cite{goldberg2010transduction} studied  such a problem under multi-label learning. Specifically, they studied single-task binary response matrix given feature matrix  where both matrices were subject to missingness. %\sout{Their proposed algorithm  worked well  empirically, and   \cite{cabral2011matrix} conducted the corresponding convergence analysis. } 
Xu et al. (2018) \cite{xu2018matrix} proposed the co-completion method by additionally penalizing the trace norm of the feature matrix. A recent work \cite{esmaeili2018transduction} considered this problem via a smoothed rank function approach.  Under the noiseless case,  \cite{chiang2018using} studied the error rate in the scenario with  corrupted side information.

Calibration \cite{deville1992calibration,sarndal2003model,fuller2011sampling} is widely used to  incorporate such prior knowledge in the learning procedure, and it improves the  estimation efficiency. In this vein, 
\cite{breidt2000local} studied the calibration with local polynomial regression  between responses and features. Later,  \cite{wu2001model} generalized this idea to the so-called model calibration under a parametric framework. Afterward,   \cite{kim2010calibration} generalized the idea to a functional type calibration equation. As far as we know, no calibration related work has been done  under the MC framework.

In this paper, we focus on MTL problems incorporating incomplete features, and we assume that certain prior knowledge about the features is also available. Our work can be embodied in the following toy example. For the well-known Netflix movie rating problem \cite{bennett2007netflix}, we aim to simultaneously complete  rating and like-dislike matrices incorporating an incomplete feature matrix, consisting of age, gender, and so on. However, all three matrices  suffer from missingness.  When additional information such as the summary statistics for age, gender, etc., can be obtained  from the census survey, we investigate the benefits of such additional information incorporated by the calibration method.  In a nutshell, we propose  a \textit{Transductive Matrix Completion with Calibration} (TMCC) algorithm, and the prior information about the  features is considered by an additional calibration constraint.
As far as we know, 
this is the first paper exploring multi-task learning under a matrix completion framework with calibration. 
     Methodologically, our method has two merits: (i) the target and feature matrices can be completed simultaneously; (ii) calibration information can be incorporated.
      Theoretically, we show the statistical guarantee of our method, and the benefit of calibration is also discussed. Besides, we have validated that the convergence  rate of the proposed algorithm is $O(1/k^{2})$.
    Numerically, synthetic data experiments show that our proposed method performs better than other existing methods, especially when the target matrix has a nonlinear transformation from the feature matrix.

\textit{Notations}. Denote $[n]$  as the set $\{1,\ldots, n\}$. Given an arbitrary matrix $\bS=(s_{ij}) \in \mathbb{R}^{n_{1} \times n_{2}}$, the Frobenius norm of $\bS$ is $\Norm{\bS}_{F} = ( \sum_{i=1}^{n_{1}}\sum_{j=1}^{n_{2}} s_{ij}^2  )^{1/2}$. Denote the singular values of $\bS$ as $\sigma_{1}(\bS),\ldots, \sigma_{n_{1}\wedge n_{2}}(\bS)$ in descending order.  The operator norm is  $\norm{\bS} = \sigma_{\max}(\bS)=\sigma_{1}(\bS)$ and the nuclear norm $\norm{\bS}_{\star} = \sum_{i=1}^{n_{1}\wedge n_{2} }\sigma_{i}(\bS)$. Besides, let $\sigma_{\min}(\bS) = \sigma_{n_{1}\wedge n_{2}}(\bS)$.

\section{Model and Algorithm}

Suppose there are $S$ response matrices, $\bY^{(1)}, \ldots, \bY^{(S)}$. For example, $\bY^{(1)}$ can be a like-dislike matrix, and $\bY^{(2)}$ can be a rating matrix, whose rows correspond to users, columns correspond to movies, and $S=2$. 
Denote the number of instances by $n$,  the number of tasks for $s$-th response matrix by $m_s$. For  $s \in [S]$, we have $\bY^{(s)} = (y_{ij}^{(s)}) \in \mathbb{R}^{n\times m_{s}}$, and  $y_{ij}^{(s)}$ can be either discrete or continuous  for $i\in[n]$.   We assume that within the same response matrix, all entries follow the same generic exponential family. For example, all entries of $\bY^{(1)}$ follow Bernoulli distributions with different success probabilities.  Let $\bR_{y}^{(s)} = (r_{y,ij}^{(s)}) \in \mathbb{R}^{n \times m_{s}}$ be the corresponding  indicator  matrix for $\bY^{(s)}$. In particular, if $y_{ij}^{(s)}$ is observed, then $r_{y,ij}^{(s)}=1$; otherwise,  $r_{y,ij}^{(s)}=0$. Furthermore,  assume that $\bY^{(s)}$ is generated from a low-rank matrix $\bZ_{\star}^{(s)} = (z_{\star, ij}^{(s)}) \in\mathbb{R}^{n\times m_s}$ by the exponential family via a base function $h^{(s)}$ and a  link function $g^{(s)}$,  namely, the density function $f^{(s)}(y_{ij}^{(s)}|z_{\star,ij}^{(s)})=h^{(s)}(y_{ij}^{(s)})\exp\{y_{ij}^{(s)}z_{\star,ij}^{(s)}-g^{(s)}(z_{\star,ij}^{(s)})\}$,  for  $s\in[S]$.  For instance, suppose $g^{(s)}(z_{\star,ij}^{(s)}) = \sigma^{2}(z_{\star,ij}^{(s)})^{2}/2$ and $h^{(s)}(y_{ij}^{(s)}) = (2\pi \sigma^{2})^{-1/2}\exp{ \{ -(y_{ij}^{(s)})^{2}/(2\sigma^{2})  \} }$, and the corresponding exponential family is the Gaussian distribution with mean $\sigma^{2}z_{\star,ij}^{(s)}$  and variance $\sigma^{2}$ with support $\mathbb{R}$.

Denote $\bX_{\star} = (x_{\star,ij})\in\mathbb{R}^{n\times d}$ as the true feature matrix consisting of $d$ feature, and it is assumed to be low-rank. 
Let $\bX =  (x_{ij})\in\mathbb{R}^{n\times d}$ be a noisy version of the true feature matrix, i.e., $\bX = \bX_{\star} + \bepsilon$, where $\bepsilon = (\epsilon_{ij}) \in \mathbb{R}^{n \times d}$ is a noise matrix with $\mathbb{E}(\mathbb{\bepsilon}) = \bzero$, and its entries are independent.
As the feature matrix is also incomplete, in a similar fashion, we denote $\bR_{x}=(r_{x,ij}) \in\mathbb{R}^{n\times d}$ as the corresponding indicator   matrix of $\bX$. That is, we only observe an incomplete matrix $\bR_{x} \circ \bX$, where $\circ$ denotes the Hadamard product. 
Let target matrix $\bZ_{\star} = [\bZ_{\star}^{(1)}, \ldots, \bZ_{\star}^{(S)}]$ be the collection of hidden parameter matrices. Our goal is to recover $\bZ_{\star}$. We believe that some hidden connection between $\bX_{\star}$ and $\bZ_{\star}$ may provide us benefits for MTL.

Our method can be illustrated in Fig~\ref{TIKZ}, where $\mbox{logit}(p) = \log\{p/(1-p)\}$, for $p \in (0,1)$.  In this example, we have three observed matrices whose entries are from Bernoulli, Gaussian, and Poisson distributions, correspondingly, and are subject to missingness. In addition, we have an incomplete feature matrix related to the target matrix and calibration information for the true feature matrix.  By the proposed TMCC algorithm, we can complete the true feature matrix and target matrix simultaneously.

\tikzset{%
  thick arrow/.style={
     -{Triangle[angle=90:1pt 1]},
     line width=0.3cm, 
     draw=blue!20 
  },
  arrow label/.style={
    text=black,
    font=\sf,
    align=center
  },
  set mark/.style={
    insert path={
      node [sloped, midway, arrow label, node contents=#1]
    }
  }
}

\begin{figure}[!t]
  \centering
\begin{tikzpicture}\tikzstyle{every node}=[scale=0.6]
% top right arrow
 %\shadedraw[inner color=red,outer color=orange,draw=orange, rounded corners] (1.65,-2.7) rectangle +(10,5.4);
 
 % \coordinate (specRoot) at (1.03 + 6.5 - 9,5);
 % \coordinate (testTreeRoot) at (4.14,2);

% Bottom Part

\draw [thick arrow]
%  (11.65, 0) arc (90:-80:3.5 and 3.5) [set mark={TMCC (simultaneously recover)}]   -- (11.65,-7);
    (10, 0) arc (90:-80:3 and 3) 
  -- (10,-6);
  
  \node[rotate = -90] at  (10 + 3,-3) {TMCC (simultaneously recover)};

\draw [->, thick] (0,-1.75) -- (0,-3)  -- (12.5,-3) node [midway, above, sloped] {with calibration constraint $\bA\bX_{\star} = \bB$} ;

  \matrix  [nodes = draw, nodes={ minimum width=9mm,
  minimum height = 8mm}] (m)
  {
    \node{1.5}; & \node{?}; &  \node {-0.3};  &  \node{$\cdots$}; &  \node {-1.1};   \\
    \node{?}; &      \node{0.1}; &   \node {-0.5}; &  \node{$\cdots$};  &       \node {?}; \\
    \node{2.3}; &      \node{?}; &   \node {?}; &  \node{$\cdots$};  &       \node {?}; \\
    \node{$\vdots$}; &  \node{$\vdots$}; &  \node{$\vdots$}; &  \node{$\ddots$};  &      \node{$\vdots$}; \\
    \node{?}; &      \node{?}; &   \node {-2.2}; &  \node{$\cdots$};  &       \node {2.9}; \\
  };
%       \draw [decorate,decoration={brace,amplitude=5pt}, xshift=-0.5pt, yshift=0pt]
% (-1.62,-1.62) -- (-1.62, 1.62) node [black,midway,xshift=-0.6cm] 
% {\footnotesize $n$};
%     \draw [decorate,decoration={brace,amplitude=5pt,mirror}, xshift=0pt, yshift=-0.5pt]
% (-1.62,-1.62) -- (1.62,-1.62) node [black,midway,yshift=-0.6cm] 
% {\footnotesize $d$};
  \draw [dashed, red, thick] (1.39,-2.5) -- (1.39,2.5);
    \draw (0, 1.75) node {Feature};
  \begin{scope}[xshift = 2.78 cm]
 \matrix  [nodes = draw, nodes={fill=yellow!20,minimum width=9mm,
 minimum height = 8mm}] 
  {
    \node{?}; & \node{0}; &  \node {1};  &  \node{$\cdots$}; &  \node {?};   \\
    \node{1}; &      \node{?}; &   \node {?}; &  \node{$\cdots$};  &       \node {0}; \\
    \node{?}; &      \node{?}; &   \node {0}; &  \node{$\cdots$};  &       \node {?}; \\
    \node{$\vdots$}; &  \node{$\vdots$}; &  \node{$\vdots$}; &  \node{$\ddots$};  &      \node{$\vdots$}; \\
    \node{?}; &      \node{1}; &   \node {?}; &  \node{$\cdots$};  &       \node {1}; \\
  };
%     \draw [decorate,decoration={brace,amplitude=5pt,mirror}, xshift=0pt, yshift=-0.5pt]
% (-1.62,-1.62) -- (1.62, -1.62) node [black,midway,yshift=-0.6cm] 
% {\footnotesize $m_{1}$};
    \draw (0,1.75) node {Bernoulli};
  \end{scope}
    \begin{scope}[xshift= 5.56cm]
 \matrix  [nodes = draw, nodes={fill=green!20,minimum width=9mm,
 minimum height = 8mm}] 
  {
    \node{-0.4}; & \node{0.7}; &  \node {-0.3};  &  \node{$\cdots$}; &  \node {?};   \\
    \node{?}; &      \node{0.1}; &   \node {?}; &  \node{$\cdots$};  &       \node {-0.5}; \\
    \node{?}; &      \node{?}; &   \node {0.1}; &  \node{$\cdots$};  &       \node {?}; \\
    \node{$\vdots$}; &  \node{$\vdots$}; &  \node{$\vdots$}; &  \node{$\ddots$};  &      \node{$\vdots$}; \\
    \node{0.1}; &      \node{?}; &   \node {?}; &  \node{$\cdots$};  &       \node {0.2}; \\
  };
%     \draw [decorate,decoration={brace,amplitude=5pt,mirror}, xshift=0pt, yshift=-0.5pt]
% (-1.62,-1.62) -- (1.62,-1.62) node [black,midway,yshift=-0.6cm] 
% {\footnotesize $m_{2}$};
    \draw (0,1.75) node {Gaussian};
    \draw (0,2.25) node[blue] {\Large Observed Matrix};
  \end{scope}
   \begin{scope}[xshift=  8.34 cm]
 \matrix  [nodes = draw, nodes={fill=pink!20,minimum width=9mm,
 minimum height = 8mm}] 
  {
    \node{1}; & \node{?}; &  \node {?};  &  \node{$\cdots$}; &  \node {2};   \\
    \node{?}; &      \node{?}; &   \node {1}; &  \node{$\cdots$};  &       \node {?}; \\
    \node{2}; &      \node{?}; &   \node {?}; &  \node{$\cdots$};  &       \node {2}; \\
    \node{$\vdots$}; &  \node{$\vdots$}; &  \node{$\vdots$}; &  \node{$\ddots$};  &      \node{$\vdots$}; \\
    \node{?}; &      \node{4}; &   \node {?}; &  \node{$\cdots$};  &       \node {3}; \\
  };
%     \draw [decorate,decoration={brace,amplitude=5pt,mirror}, xshift=0pt, yshift=-0.5pt]
% (-1.62,-1.62) -- (1.62,-1.62) node [black,midway,yshift=-0.6cm] 
% {\footnotesize $m_{3}$};
    \draw (0,1.75) node {Poisson};
  \end{scope}

  % Upper Part
  \begin{scope}[ yshift = -6cm]

  %\shadedraw[inner color=red,outer color=orange,draw=orange, rounded corners] (1.65,-2.7) rectangle +(10,5.4);

  \matrix  [nodes = draw, nodes={minimum width=9mm,
  minimum height = 8mm}] at (0, 0)
  {
    \node{1.5}; & \node{1.3}; &  \node {-0.3};  &  \node{$\cdots$}; &  \node {-1};   \\
    \node{1.2}; &      \node{0}; &   \node {-0.5}; &  \node{$\cdots$};  &       \node {1.3}; \\
    \node{2.1}; &      \node{-0.3}; &   \node {-1.5}; &  \node{$\cdots$};  &       \node {-1.2}; \\
    \node{$\vdots$}; &  \node{$\vdots$}; &  \node{$\vdots$}; &  \node{$\ddots$};  &      \node{$\vdots$}; \\
    \node{3}; &      \node{-0.4}; &   \node {-2}; &  \node{$\cdots$};  &       \node {2.9}; \\
  };
      \draw [decorate,decoration={brace,amplitude=5pt}, xshift=-0.5pt, yshift=0pt]
(-1.39,-1.23) -- (-1.39, 1.23) node [black,midway,xshift=-0.6cm] 
{\footnotesize $n$};
    \draw [decorate,decoration={brace,amplitude=5pt,mirror}, xshift=0pt, yshift=-0.5pt]
(-1.39,-1.23) -- (1.39,-1.23) node [black,midway,yshift=-0.6cm] 
{\footnotesize $d$};
\draw [dashed, red, thick] (1.39,-2.5) -- (1.37,2.5);
    \draw (0,1.75) node {noiseless feature };
    \end{scope}
  \begin{scope}[xshift = 2.78 cm , yshift = - 6cm]
 \matrix  [nodes = draw, nodes={fill=yellow!20,minimum width=9mm,
 minimum height = 8mm}] 
  {
    \node{0.3}; & \node{0.2}; &  \node {0.8};  &  \node{$\cdots$}; &  \node {0.2};   \\
    \node{0.7}; &      \node{0.6}; &   \node {0.7}; &  \node{$\cdots$};  &       \node {0.1}; \\
    \node{0.2}; &      \node{0.7}; &   \node {0.1}; &  \node{$\cdots$};  &       \node {0.3}; \\
    \node{$\vdots$}; &  \node{$\vdots$}; &  \node{$\vdots$}; &  \node{$\ddots$};  &      \node{$\vdots$}; \\
    \node{0.1}; &      \node{0.9}; &   \node {0.3}; &  \node{$\cdots$};  &       \node {0.6}; \\
  };
    \draw [decorate,decoration={brace,amplitude=5pt,mirror}, xshift=0pt, yshift=-0.5pt]
    (-1.39,-1.23) -- (1.39,-1.23) node [black,midway,yshift=-0.6cm] 
{\footnotesize $m_{1}$};
    \draw (0,1.75) node [text width=3cm, align = center]{logit of probability (Bernoulli)};
  \end{scope}
    \begin{scope}[xshift=5.56cm , yshift = -6cm]
 \matrix  [nodes = draw, nodes={fill=green!20,minimum width=9mm,
 minimum height = 8mm}] 
    {
    \node{-0.2}; & \node{0.8}; &  \node {-0.3};  &  \node{$\cdots$}; &  \node {0.9};   \\
    \node{0.6}; &      \node{0.2}; &   \node {-1.1}; &  \node{$\cdots$};  &       \node {-0.5}; \\
    \node{0.4}; &      \node{0.5}; &   \node {0.1}; &  \node{$\cdots$};  &       \node {-1.3}; \\
    \node{$\vdots$}; &  \node{$\vdots$}; &  \node{$\vdots$}; &  \node{$\ddots$};  &      \node{$\vdots$}; \\
    \node{0.2}; &      \node{1.1}; &   \node {-0.7}; &  \node{$\cdots$};  &       \node {0.3}; \\
  };
    \draw [decorate,decoration={brace,amplitude=5pt,mirror}, xshift=0pt, yshift=-0.5pt]
    (-1.39,-1.23) -- (1.39,-1.23) node [black,midway,yshift=-0.6cm] 
{\footnotesize $m_{2}$};
    \draw (0,1.75) node {mean (Gaussian)};
        \draw (0,2.25) node[blue] {\Large Target Matrix};
  \end{scope}
   \begin{scope}[xshift=8.34cm  , yshift = -6cm]
 \matrix  [nodes = draw, nodes={fill=pink!20,minimum width=9mm,
 minimum height = 8mm}] 
  {
    \node{0.3}; & \node{0.3}; &  \node {0};  &  \node{$\cdots$}; &  \node {0.3};   \\
    \node{0}; &      \node{0.3}; &   \node {0}; &  \node{$\cdots$};  &       \node {0}; \\
    \node{0.3}; &      \node{0}; &   \node {0.3}; &  \node{$\cdots$};  &       \node {0.3}; \\
    \node{$\vdots$}; &  \node{$\vdots$}; &  \node{$\vdots$}; &  \node{$\ddots$};  &      \node{$\vdots$}; \\
    \node{0.5}; &      \node{0.7}; &   \node {0.5}; &  \node{$\cdots$};  &       \node {0.3}; \\
  };
    \draw [decorate,decoration={brace,amplitude=5pt,mirror}, xshift=0pt, yshift=-0.5pt]
    (-1.39,-1.23) -- (1.39,-1.23) node [black,midway,yshift=-0.6cm] 
{\footnotesize $m_{3}$};
    \draw (0,1.75) node {log of mean (Poisson)};
  \end{scope}

  % Upper Part

%   \node[
%   fit=(m-1-1)(m-1-5),
%   inner xsep=0,
%   above delimiter=\{,
%   label=above:$k_1$
% ] {};
\end{tikzpicture}
%\vspace{.3in}
\caption{
Algorithm Illustration, where a Question Mark Represents a Missing Value. }\label{TIKZ}
\end{figure}
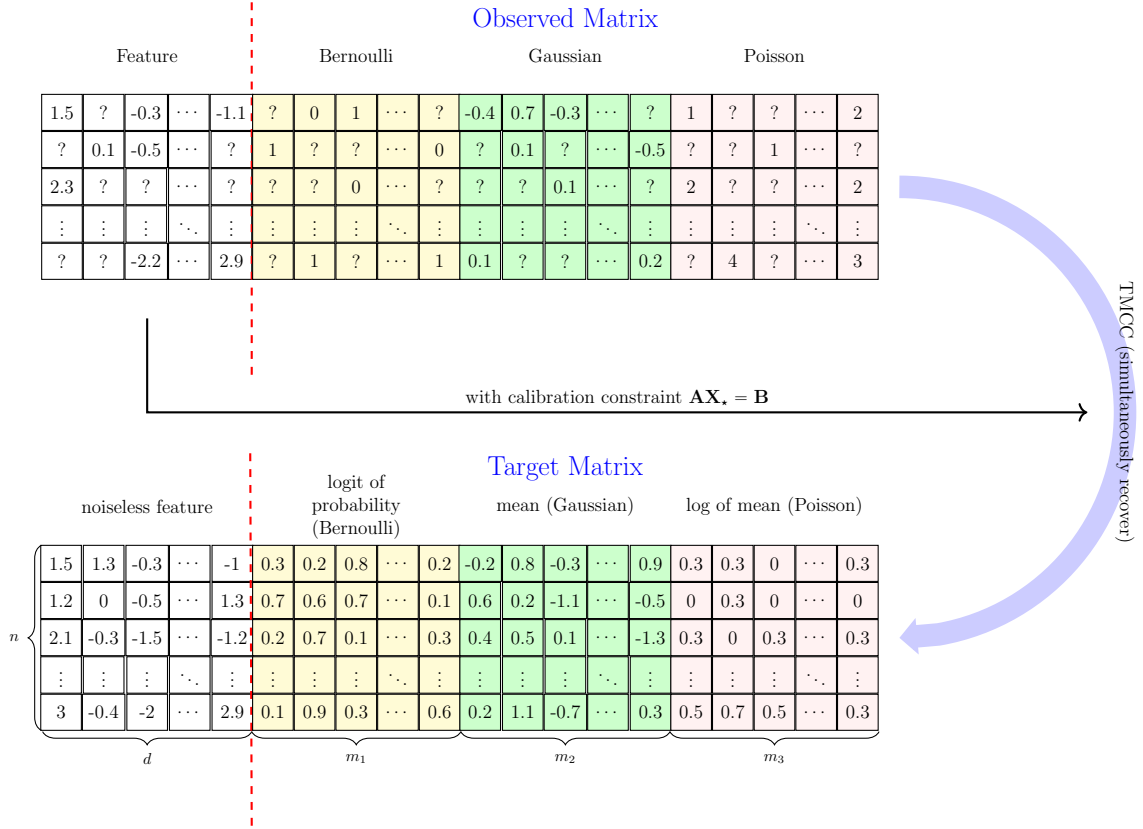

Given density functions $\{f^{(s)}\}_{s=1}^{S}$,  or collection of $\{g^{(s)}\}_{s=1}^{S}$ and $\{h^{(s)}\}_{s=1}^{S}$,  the negative quasi log-likelihood function for $\bZ^{\dagger}$ is 
\begin{align}\label{original likelihood}
\ell_{0}\left(\bZ^{\dagger}\right) =& -\sum_{s=1}^{S} \sum_{(i,j)\in [n] \times [m_{s}]   } r_{y,ij}^{(s)} \log\left\{f^{(s)}(y_{ij}^{(s)}|z_{ij}^{\dagger(s)})\right\}\notag\\
=& \sum_{s=1}^{S} \sum_{(i,j)\in [n] \times [m_{s}]   } r_{y,ij}^{(s)}\left[\left\{-y_{ij}^{(s)}z_{ij}^{\dagger(s)}+g^{(s)}\left(z_{ij}^{\dagger(s)}\right)\right\} + \log\left\{h^{(s)}(y_{ij}^{(s)}) \right\}\right]
%\notag\\
%&+ \sum_{s=1}^{S} \sum_{(i,j)\in [n] \times [m_{s}] } r_{y,ij}^{(s)}\log\left\{h^{(s)}(y_{ij}^{(s)}) \right\},
\end{align}
where $\bZ^{\dagger} = [\bZ^{\dagger(1)}, \ldots, \bZ^{\dagger(S)}]$ and $\bZ^{\dagger(s)} = (z_{ij}^{\dagger(s)}) \in \mathbb{R}^{n \times m_{s}}$. 
As the argument $\bZ^{\dagger}$ is irrelevant with the  second term  of \eqref{original likelihood}, we refine \eqref{original likelihood} as
\begin{align}
\ell\left(\bZ^{\dagger}\right) = \sum_{s=1}^{S} \sum_{(i,j)\in [n] \times [m_{s}]} r_{y,ij}^{(s)}\left\{-y_{ij}^{(s)}z_{ij}^{\dagger(s)}+g^{(s)}\left(z_{ij}^{\dagger(s)}\right)\right\}.
\notag
\end{align}
Let $\bY = [\bY^{(1)}, \ldots, \bY^{(S)}]$ be the collection of each specific task, $D = d + \sum_{s=1}^{S}m_{s}$ and $\bM_{\star}$ be the concatenated matrix $[\bX_{\star}, \bZ_{\star}]\in\mathbb{R}^{n\times D}$.
Suppose we have an additional calibration constraint $\bA\bX_{\star} = \bB$, where $\bA$ and $\bB$ are available.  To incorporate the calibration constraint, the estimation procedure can be formulated as 
\begin{align}\label{main object}
\wh{\bM} &=  \argmin_{\bM^{\dagger} \in \mathbb{R}^{n \times D}}
 \left[ \frac{1}{nD}\left\{\ell\left(\bZ^{\dagger}\right)  + \frac{1}{2}\Norm{\bR_{x} \circ \left(\bX^{\dagger} - \bX\right)}_{F}^{2} \right\} \right.  \notag \\
& \left. \ \ \ \ \ \ \ \ \ \ \ \ \ \ \ \ \ \ \ \ \quad\quad + \tau_{1}\Norm{\bA\bX^{\dagger} - \bB}_{F}^{2}   + \tau_2\Norm{\bM^{\dagger}}_{\star} \right]\notag \\
&=: \argmin_{\bM^{\dagger} \in \mathbb{R}^{n \times D}} f_{\tau_{1}}\left(\bM^{\dagger}\right)+ \tau_2\Norm{\bM^{\dagger}}_{\star}\notag\\
& =: \argmin_{\bM^{\dagger} \in \mathbb{R}^{n \times D}} \mathcal{L}_{\tau_{1}, \tau_{2}}\left(\bM^{\dagger}\right),
\end{align}
where $\bM^{\dagger} = [\bX^{\dagger}, \bZ^{\dagger}]$, $\wh{\bM} = [\wh{\bX}, \wh{\bZ}]$ and we employ the commonly used square loss to complete the feature part. One trivial case is that when the calibration information is strong enough, i.e., $\bA$ is invertible, the feature matrix  $\bX_{\star}$ can be exactly  recovered.   Note that our learning object is quite general. When $S = 1$, the entries within the only response matrix come from Bernoulli distributions, and there's no third term related to calibration in \eqref{main object}, our learning object degenerates to the case considered in \cite{goldberg2010transduction}, and \cite{xu2018matrix}. However, \cite{goldberg2010transduction} had an additional assumption that there exists a linear relationship between the feature matrix and the hidden parameter matrix. We do not make such a structural assumption. Besides, \cite{xu2018matrix} used additional nuclear norms to penalize the feature matrix for single task learning. When there is no feature information, the objective function will degenerate to the case in \cite{alaya2019collective}.

%\subsection{Algorithm}
We propose a \textit{Transductive Matrix Completion with Calibration} (TMCC)  algorithm to obtain the estimator in \eqref{main object}. 
 In the algorithm, the $ij$-th element of the gradient $\partial f_{\tau_{1}}(\bM^{\dagger})$ with respect to $\bM^{\dagger}$ is 
 \begin{align*}
\partial f_{\tau_{1}}(\bM^{\dagger})_{ij}= 
\begin{dcases}
\frac{r_{x,ij}}{nD}\left(x_{ij}^{\dagger} - x_{ij}\right) + 2\tau_1(\bA\trans\bA\bX^{\dagger} - \bA\trans\bB)_{ij},\quad \mbox{for}\ j \in [d],
\\
\frac{r_{y,ij}}{nD}\left\{-y_{ij_{\circ}}^{(s)} + \frac{\partial g^{(s)}(z_{ij_{\circ}}^{\dagger(s)})}{\partial z_{ij_{\circ}}^{\dagger(s)}}  \right\}, \quad\quad\quad\quad\quad\quad\quad \mbox{for}\ \sum_{k=1}^{s-1}m_{k} + d <j\leq \sum_{k=1}^{s}m_{k} + d,
\end{dcases}
\end{align*}
where $j_{\circ} = j- d - \sum_{k=1}^{s-1}m_{k}$. For any generic matrix $\bS$ with the singular value decomposition (SVD) $\bS=\bU\bSigma\bV\trans$, $\bSigma = \mbox{diag}(\sigma_{i})$, $\sigma_{1} \ge \cdots \ge \sigma_{r}$  and $r = \mbox{rank}(\bS)$. Denote the singular value soft threshold operator by $\mathcal{T}_{c}(\bS) = \bU\mbox{diag}((\sigma_{i} - c)_{+})\bV\trans$ for a constant $c > 0$ and $x_{+} = \max(x, 0)$. The detailed algorithm is presented in Algorithm \ref{FPC}.
\begin{algorithm}[!t]
   \caption{TMCC algorithm}
   \label{FPC}
   \KwIn{Incomplete matrices $\bX$, $\bY$; indicator matrices $\bR_x$, $\bR_y$; calibration constraint matrices $\bA$ and $\bB$, tuning parameters $\tau_{1}$, $\tau_{2}$;  learning depth $K$, step size $\eta$, stopping criterion $\kappa$.}
	\kwInit{Random matrices $\bM^{(0)} = \bM^{(1)} \in\mathbb{R}^{n\times D}$,  $c = 1$.}
   \For {$k = 1$ \KwTo $K$  }
   	{ 
    	Compute $\theta = (c-1)/(c+2)$.\\
    	Compute $\bQ = (1+\theta)\bM^{(k)} - \theta \bM^{(k-1)}. $\\
    	Compute $\bT = \bQ - \eta\partial f_{\tau_{1}}(\bQ)$. \\
    	Compute $\bM^{(k+1)} = \mathcal{T}_{\eta\tau_{2}}(\bT)$. \\
   		 \uIf{ $\mathcal{L}_{\tau_{1}, \tau_{2}}(\bM^{(k+1)}) > \mathcal{L}_{\tau_{1}, \tau_{2}}(\bM^{(k)})$ }{
    	$c = 1$;
 		}
 		\Else{
    		$c = c+1$; 
  		}
  		\If{ $ \Abs{\mathcal{L}_{\tau_{1}, \tau_{2}}(\bM^{(k+1)}) -  \mathcal{L}_{\tau_{1}, \tau_{2}}(\bM^{(k)})} \leq \kappa$ }{
    		$\bM^{\ddagger} = \bM^{(k+1)}$;\\
    % 		$\bM^{(K)} = \bM^{(k)}$;\\
    		break;
 		}
   	}
   	% $\bM^{(1)} = \bM^{(K+1)}$;\\
   	% $\bM^{(0)} = \bM^{(K)}$;\\
   \KwOut{$\bM^{\ddagger}$}
\end{algorithm}
The parameter $K$ is a predetermined integer that controls the learning depth of TMCC, and $\eta$ is a predetermined constant for the step size. Within the algorithm, $\kappa$ in the Algorithm \ref{FPC} is a predetermined  stopping criterion. The proposed algorithm iteratively updates the gradient and does singular value thresholding (SVT) \cite{cai2010singular},  in addition to an accelerated proximal gradient decent \cite{ji2009accelerated} step to get a fast convergence rate.

\section{Theoretical Guarantee}

In this section, we first provide convergence rate  analysis for TMCC algorithm. Before that, we make the following technical assumptions.
\begin{assumption}\label{A1}
 There exists a positive constant $ p_{\min}$ such that
 \begin{align} 
    \min \left[ \left\{\min_{s \in [S] }\min_{(i,j) \in [n] \times [m_{s}]   } \pi_{y, ij}^{(s)} \right\} ,
    \left\{   \min_{(i,j) \in [n] \times [d]  } \pi_{x, ij}  \right\} \right] \geq p_{\min},\notag
\end{align}
where $\pi_{x,ij} = \mathbb{P}(r_{x,ij} = 1)$, $\pi_{y,ij}^{(s)} = \mathbb{P}(r_{y,ij}^{(s)} = 1)$.
Further, there exists a positive constant $
\gamma$ such that 
\begin{align}
   &\max \left( \left[ \max_{i \in  [n]} \left\{\pi_{x,i\cdot} + \pi_{y, i\cdot}\right\}\right] ,
   \left[\max_{j \in [d]} \pi_{x, \cdot j}\right], \left[\max_{s\in [S]}\max_{j \in [m_{s}]} \pi_{y, \cdot j}^{(s)} \right]   \right) \leq  \gamma,\notag 
\end{align}
where $ \pi_{x, i\cdot} = \sum_{j\in [d]} \pi_{x,ij}$, $ \pi_{x, \cdot j} = \sum_{i\in [n]} \pi_{x,ij}$, $ \pi_{y, \cdot j}^{(s)} = \sum_{i\in [n]} \pi_{y,ij}^{(s)}$, 
$\pi_{y, i\cdot} =$ $ \sum_{s \in [S]}\sum_{j\in [m_{s}]} \pi_{y,ij}^{(s)}$.\\
\end{assumption}
\begin{assumption}\label{A2}
 There exists a positive constant $\alpha$, such that
$
 \max \left\{ \Norm{\bX_{\star}}_{\infty} , \Norm{\bZ_{\star}}_{\infty}  \right\} \leq \alpha.
$
\end{assumption}
\begin{assumption}\label{A3}
Let $\mathcal{D} = [ -\alpha - \delta, \alpha + \delta]$, for some $\delta > 0$. 
For any $z \in \mathcal{D}$, there exist positive constants $L_{\alpha}$ and $U_{\alpha}$, such that 
$L_{\alpha}  \leq   (g^{(s)})''(z) \leq  U_{\alpha},$
where $g^{(s)}$ is the link function  of exponential family for $s$-th response matrix, for $s \in[S]$.
\end{assumption}
\begin{assumption}\label{A3.5}
There exists a positive constant $\zeta$, for any $\lambda \in \mathbb{R}$, such that  $ \mathbb{E}(e^{\lambda\epsilon_{ij}}) \leq e^{\lambda^{2}\zeta/2} $, where $\epsilon_{ij}$'s are noises for the feature matrix. 
\end{assumption}
\begin{assumption}\label{A4}
There exists a constant $C>0$, such that $\sigma_{\min}(\bA) \geq C > 0$.
\end{assumption}

Assumption~\ref{A1} controls the sampling probabilities for our model. The first part ensures that all the sampling probabilities are bounded away from zero. The second part aims to bound the operator norm of a Rademacher matrix of the same dimension as $\bM_{\star}$ stochastically.  
Although we assume that the parameter matrix $\bZ_{\star}$  is  bounded in Assumption~\ref{A2}, the support of entries in $\bY$ can be unbounded. For instance, the support of a Poisson distribution is unbounded, while its mean is fixed.  In other words,   Assumption \ref{A2} implies  $\bM_{\star} \in \mathcal{G}_{\infty}(\alpha) := \{\bG \in \mathbb{R}^{n \times D}: \norm{\bG}_{\infty} \leq \alpha\}$.
Assumption \ref{A3} is mild under the canonical exponential family framework. That is, we have $\mbox{Var}(y_{ij}^{s}|g^{s}(z_{\star,ij}^{s})) = (g^{(s)})''(z_{\star,ij}^{s}) > 0$ for each $i,j$. We extend  it a little bit with tolerance $\delta$   for ease of proof. Furthermore, define $\tilde{L}_{\alpha} = (L_{\alpha} \wedge 1)$ and $\tilde{U}_{\alpha} = ( U_{\alpha} \vee \zeta \vee 1)$.
Assumption \ref{A3.5} implies that the errors for the feature matrix come from sub-Gaussian distributions.
Assumption \ref{A4} indicates that the calibration matrix $\bA$ should be of full rank. 
The following Theorem \ref{algorithm convergence} shows the convergence rate of the proposed algorithm is $O(1/k^{2})$. 
\begin{theorem}\label{algorithm convergence}
Suppose that Assumption $\ref{A1}\sim\ref{A3.5}$ hold, $\tau_{1} \leq \ \ \ $ $c_1 \min[ \{\sigma_{\max}(\bA)\}^{-2}(nD\alpha)^{-1/2}   $,
$  \{ nD\sigma_{\min}^{2}(\bA) \}^{-1}
  \tilde{L}_{\alpha}p_{\min} ] $ and  $\tau_{2} = (nD)^{-1}  2c_{2}\{ (\tilde{U}_{\alpha})^{1/2} \vee 1/\delta \}
\{ \gamma^{1/2} +   (\log ( n\vee D  )    ) ^{3/2} \} $.   The sequences $\{\bM^{(k)}\}$ generated by Algorithm \ref{FPC} satisfy
\begin{align}
  f_{\tau_{1}}\left(\bM^{(k)}  \right)- 
  f_{\tau_{1}}\left(\bM_{\star}\right) 
  \leq 
  \frac{  2 \tilde{L}\Norm{\bM^{(0)} - \bM_{\star}   }_{F}^{2}    }{\eta (k+1)^{2} },\notag
\end{align}
with probability at least $1 - 4/(n+D)$, where 
$\tilde{L}$ is a constant related to $c_{1}$ and  $\alpha^{1/2}\tilde{U}_{\alpha}$. 
\end{theorem}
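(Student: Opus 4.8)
The plan is to recognize Algorithm~\ref{FPC} as an accelerated proximal gradient (FISTA-type) scheme for the composite objective $\mathcal{L}_{\tau_1,\tau_2}=f_{\tau_1}+\tau_2\norm{\cdot}_\star$, in which $\bT=\bQ-\eta\partial f_{\tau_1}(\bQ)$ is the forward gradient step, $\bM^{(k+1)}=\mathcal{T}_{\eta\tau_2}(\bT)$ is the proximal step (the SVT \cite{cai2010singular} being exactly $\mathrm{prox}_{\eta\tau_2\norm{\cdot}_\star}$), and $\theta=(c-1)/(c+2)$ is Nesterov extrapolation. Away from restarts, taking $c=k$ makes $t_k=(k+1)/2$ obey the classical momentum recursion $\theta=(t_k-1)/t_{k+1}$, so the $O(1/k^2)$ rate will follow from the standard accelerated proximal gradient analysis \cite{ji2009accelerated} \emph{provided} two deterministic ingredients hold: (i) $f_{\tau_1}$ is convex, and (ii) $\partial f_{\tau_1}$ is Lipschitz with a constant $\tilde L$ for which the step size satisfies $\eta\le 1/\tilde L$. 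The bulk of the work is to secure (i)--(ii); the telescoping against the fixed reference $\bM_\star$ then delivers the bound.

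Convexity (i) is immediate: each log-partition $g^{(s)}$ is convex since $(g^{(s)})''\ge 0$ (it is a conditional variance), so $\ell$ is convex, while the feature square loss $\tfrac12\norm{\bR_x\circ(\bX^\dagger-\bX)}_F^2$ and the calibration penalty $\tau_1\norm{\bA\bX^\dagger-\bB}_F^2$ are convex quadratics. For (ii) I would read the Hessian block by block off the gradient formula stated before Algorithm~\ref{FPC}: the feature block contributes a diagonal part bounded by $(nD)^{-1}$ together with $2\tau_1\bA\trans\bA$, whose operator norm is at most $2\tau_1\{\sigma_{\max}(\bA)\}^2$; the target block contributes $(nD)^{-1}r_{y,ij}^{(s)}(g^{(s)})''(z_{ij}^{\dagger(s)})$, bounded by $(nD)^{-1}U_\alpha$ \emph{whenever the argument lies in} $\mathcal{D}$. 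The first constraint $\tau_1\le c_1\{\sigma_{\max}(\bA)\}^{-2}(nD\alpha)^{-1/2}$ is exactly what forces the calibration contribution $2\tau_1\{\sigma_{\max}(\bA)\}^2\le 2c_1(nD\alpha)^{-1/2}$ to the right order, and combining the three pieces produces the aggregate $\tilde L$ of the advertised form (controlled by $c_1$ and $\alpha^{1/2}\tilde U_\alpha$).

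The main obstacle is the qualifier ``whenever the argument lies in $\mathcal{D}$'': Assumption~\ref{A3} bounds $(g^{(s)})''$ only on $\mathcal{D}=[-\alpha-\delta,\alpha+\delta]$, so a uniform Lipschitz constant along the entire trajectory is not automatic and must be earned, and this is where the randomness and the choice of $\tau_2$ enter. I would work on the event that the stochastic gradient at the truth is dominated by the regularization, $\tau_2\gtrsim\norm{\partial f_{\tau_1}(\bM_\star)}$, which by matrix concentration (symmetrization and operator-norm control of a Rademacher matrix as anticipated in the second half of Assumption~\ref{A1}, using the row/column sum bound $\gamma$ and the sub-Gaussian tail of Assumption~\ref{A3.5}) holds with probability at least $1-4/(n+D)$; the prescribed $\tau_2=(nD)^{-1}2c_2\{(\tilde U_\alpha)^{1/2}\vee 1/\delta\}\{\gamma^{1/2}+(\log(n\vee D))^{3/2}\}$ is precisely this threshold, with $\gamma^{1/2}$ the variance term and $(\log(n\vee D))^{3/2}$ the tail term for unbounded (e.g.\ Poisson) entries. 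On this event, coercivity of $\mathcal{L}_{\tau_1,\tau_2}$ together with the descent/monotone-restart structure confines the iterates and the extrapolated points $\bQ$ to a sublevel set on which $\norm{\cdot}_\infty\le\alpha+\delta$, so that $\mathcal{D}$ contains every argument at which $(g^{(s)})''$ is evaluated and the uniform $\tilde L$ is legitimate.

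Finally, conditioning on that event, I would run the energy-telescoping argument: combine the descent inequality obtained from $\eta\le 1/\tilde L$ with the prox optimality of $\mathcal{T}_{\eta\tau_2}$, form the Lyapunov quantity $\tfrac{2}{\eta}t_k^2\{\,\mathcal{L}_{\tau_1,\tau_2}(\bM^{(k)})-\mathcal{L}_{\tau_1,\tau_2}(\bM_\star)\,\}+\norm{t_k\bM^{(k)}-(t_k-1)\bM^{(k-1)}-\bM_\star}_F^2$ with $t_k=(k+1)/2$, and telescope to get the $O(1/k^2)$ decay of the composite gap $\mathcal{L}_{\tau_1,\tau_2}(\bM^{(k)})-\mathcal{L}_{\tau_1,\tau_2}(\bM_\star)$, from which I would read off the advertised bound on $f_{\tau_1}(\bM^{(k)})-f_{\tau_1}(\bM_\star)$ after accounting for the nuclear-norm terms. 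The adaptive restart ($c\leftarrow 1$ when the objective would increase) only enforces monotonicity and therefore cannot degrade the worst-case rate, so the guarantee persists on the full trajectory. I expect the genuinely delicate steps to be the trajectory-confinement argument that legitimizes the uniform $\tilde L$ and the matrix-concentration calculation pinning down $\tau_2$ and the $1-4/(n+D)$ probability; the telescoping itself is routine once (i)--(ii) are in hand.
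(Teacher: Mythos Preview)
Your overall strategy---verify the smoothness hypothesis needed by accelerated proximal gradient on a high-probability event and then invoke the telescoping argument of \cite{ji2009accelerated}---is the same as the paper's, and in particular the probability $1-4/(n+D)$ arises in both via the matrix-concentration bound on $\norm{\nabla\tilde\ell(\bM_\star)}$ (Lemma~\ref{Gradient Bound}). The technical execution of the Lipschitz step differs. You argue that $\nabla f_{\tau_1}$ is $\tilde L$-Lipschitz by reading off entrywise Hessian bounds and combining the $(nD)^{-1}U_\alpha$, $(nD)^{-1}$, and $2\tau_1\sigma_{\max}^2(\bA)$ contributions. The paper instead bounds $|f_{\tau_1}(\bS)-f_{\tau_1}(\bT)|$ directly: it writes $|\tilde\ell(\bS)-\tilde\ell(\bT)|\le\norm{\nabla\tilde\ell(\wt\bM)}\,\norm{\bS-\bT}_\star$ via the trace inequality for some intermediate $\wt\bM\in\mathcal G_\infty(\alpha)$, controls $\norm{\nabla\tilde\ell(\wt\bM)}\le\norm{\nabla\tilde\ell(\bM_\star)}+\norm{\nabla\tilde\ell(\wt\bM)-\nabla\tilde\ell(\bM_\star)}$, bounds the second piece by $2(nD)^{-1/2}\alpha^{1/2}\tilde U_\alpha$, and uses the constraint on $\tau_1$ to absorb the calibration quadratic, arriving at $\tilde L=2c_1+c_3$. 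Your route has the merit of directly producing the \emph{gradient}-Lipschitz constant that Theorem~4.1 of \cite{ji2009accelerated} actually requires, whereas the paper's display as written yields only Lipschitz continuity of $f_{\tau_1}$ itself (the same ingredients would give the gradient bound with minor rearrangement). On the question of keeping the arguments inside $\mathcal D$ so that Assumption~\ref{A3} applies, the paper simply assumes $\wt\bM\in\mathcal G_\infty(\alpha)$ without further comment; you correctly flag this as the delicate point and propose a coercivity/sublevel-set argument, but note that a nuclear-norm sublevel set does not by itself deliver an $\ell_\infty$ bound, so that part of your sketch would need an additional device (e.g.\ an explicit $\ell_\infty$ constraint or projection) to close fully.
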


The following theorem presents the statistical guarantee for the proposed method.

\begin{theorem}\label{T1}
Suppose that Assumption $\ref{A1} \sim \ref{A3.5}$ hold, $\tau_{1} \leq \ \ $
$\{nD\sigma_{\min}^{2}(\bA)\}^{-1} c_{1} \tilde{L}_{\alpha}p_{\min}$ and  $\tau_{2} = (nD)^{-1}  2c_{2}\{ (\tilde{U}_{\alpha})^{1/2} \vee 1/\delta \} 
\{ \gamma^{1/2} +   (\log ( n\vee D  )    ) ^{3/2} \} $. Then, with probability as least $1 - 4/(n + D)$,
\begin{align}\label{C1_eq}
&\quad \left\{\frac{1}{nD} + \frac{8\tau_{1}}{\tilde{L}_{\alpha}p_{\min}  }\sigma_{\min}^{2}(\bA)  \right\} \Norm{\wh{\bX} - \bX_{\star}}_{F}^{2}     +        \frac{1}{nD}\Norm{\wh{\bZ} - \bZ_{\star}}_{F}^{2}\notag \\  
&\leq  \frac{c {\text{rank}} (\bM_{\star})    }{nDp_{\min}^{2}}\left\{ \alpha^{2} + \frac{  \tilde{U}_{\alpha} \vee  1/\delta^{2}     }{  \tilde{L}_{\alpha}^{2}   }       \right\}
\left\{  \gamma + \log^{3}(n \vee D)    \right\},
\end{align} 
where $c, c_{1}, c_{2}$ are positive constants.
\end{theorem}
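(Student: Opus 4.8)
The plan is to follow the standard program for nuclear-norm-penalized $M$-estimation (in the style of Negahban--Wainwright and Klopp), adapted to our composite loss, and to read off the calibration gain from a deterministic curvature term. Write $\Delta = \wh{\bM} - \bM_{\star}$ with feature block $\Delta_X = \wh{\bX} - \bX_{\star}$ and target block $\Delta_Z = \wh{\bZ} - \bZ_{\star}$. The first step is the \emph{basic inequality}: since $\wh{\bM}$ minimizes $\mathcal{L}_{\tau_1,\tau_2}$, we have $f_{\tau_1}(\wh{\bM}) + \tau_2\Norm{\wh{\bM}}_{\star} \le f_{\tau_1}(\bM_\star) + \tau_2\Norm{\bM_\star}_{\star}$. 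Because each $g^{(s)}$ is convex (it is a cumulant function), the feature squared loss is convex, and the calibration penalty is convex, $f_{\tau_1}$ is convex; a first-order Taylor expansion with integral remainder lower-bounds $f_{\tau_1}(\wh{\bM}) - f_{\tau_1}(\bM_\star)$ by $\langle \nabla f_{\tau_1}(\bM_\star), \Delta\rangle$ plus a second-order curvature term, where Assumptions \ref{A2}--\ref{A3} (with the tolerance $\delta$) guarantee that the curvature bound $L_\alpha$ applies along the segment from $\bM_\star$ to $\wh{\bM}$ on the relevant region $\mathcal{D}$.

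Second, I would control the stochastic part of $\nabla f_{\tau_1}(\bM_\star)$. Its feature block is $-\tfrac{1}{nD} r_{x,ij}\epsilon_{ij}$ (centered, sub-Gaussian by Assumption \ref{A3.5}) and its target block is $\tfrac{1}{nD} r_{y,ij}^{(s)}\{(g^{(s)})'(z_{\star,ij}^{(s)}) - y_{ij}^{(s)}\}$, which is centered (the exponential-family mean is $(g^{(s)})'$) and sub-exponential; crucially, the \emph{deterministic} calibration gradient $2\tau_1\bA\trans(\bA\bX_\star - \bB)$ vanishes since $\bA\bX_\star = \bB$. By trace duality, $\langle \nabla_{\mathrm{sto}} f_{\tau_1}(\bM_\star), \Delta\rangle \le \Norm{\nabla_{\mathrm{sto}} f_{\tau_1}(\bM_\star)}\,\Norm{\Delta}_{\star}$, so it suffices to show the operator norm of this random matrix is at most $\tau_2/2$ with high probability. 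I expect this to be the main obstacle: it requires a matrix-concentration argument (matrix Bernstein after truncation, with the symmetrization controlled through the Rademacher bound flagged in the second part of Assumption \ref{A1}), and the resulting scale $\gamma^{1/2} + \log^{3/2}(n\vee D)$ is exactly what is built into $\tau_2$. Both this bound and the restricted-strong-convexity estimate below would most cleanly be isolated as separate lemmas.

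Third, with $\tau_2$ dominating the noise, the basic inequality yields the cone condition $\Norm{\Delta''}_{\star} \le 3\Norm{\Delta'}_{\star}$, where $\Delta',\Delta''$ project $\Delta$ onto the tangent space of $\bM_\star$ and its complement, whence $\Norm{\Delta}_{\star} \le 4\Norm{\Delta'}_{\star} \le 4\sqrt{2\,\text{rank}(\bM_\star)}\,\Norm{\Delta}_F$. On the other side I would lower-bound the curvature term by a restricted-strong-convexity (RSC) inequality: Assumption \ref{A3} gives the target block $\gtrsim \tfrac{L_\alpha}{nD}\sum r_{y,ij}^{(s)}(\Delta_{Z,ij}^{(s)})^2$ and the feature block $\tfrac{1}{nD}\sum r_{x,ij}\Delta_{X,ij}^2$; a uniform (Rademacher-complexity) argument over the cone converts these sampled quadratic forms into their population counterparts, replacing the indicators by $p_{\min}$ up to additive slack of order $\gamma + \log^3(n\vee D)$, which is absorbed using $\Norm{\Delta}_{\infty}\le 2\alpha$ and the cone bound. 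The calibration penalty contributes the \emph{deterministic} quadratic $\tau_1\Norm{\bA\Delta_X}_F^2 \ge \tau_1\sigma_{\min}^2(\bA)\Norm{\Delta_X}_F^2$ (using $\bA\bX_\star=\bB$ and Assumption \ref{A4}), which needs no sampling correction and is precisely the source of the extra $\tfrac{8\tau_1}{\tilde{L}_{\alpha} p_{\min}}\sigma_{\min}^2(\bA)$ term sharpening the feature-error coefficient.

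Finally, I would combine the two sides. The RSC lower bound $\tfrac{L_\alpha p_{\min}}{nD}\Norm{\Delta_Z}_F^2 + \tfrac{p_{\min}}{nD}\Norm{\Delta_X}_F^2 + \tau_1\sigma_{\min}^2(\bA)\Norm{\Delta_X}_F^2$ is at most $\tfrac{3\tau_2}{2}\Norm{\Delta'}_{\star} \le C\tau_2\sqrt{\text{rank}(\bM_\star)}\,\Norm{\Delta}_F$. Dividing through by $\tilde{L}_{\alpha} p_{\min}$ renormalizes the left side to the stated form (since $\tilde{L}_{\alpha}\le 1\le L_\alpha/\tilde L_\alpha$, both $\Norm{\Delta_Z}_F^2$ and $\Norm{\Delta_X}_F^2$ acquire coefficient at least $\tfrac{1}{nD}$, with the calibration term becoming the $\tfrac{8\tau_1}{\tilde{L}_{\alpha}p_{\min}}\sigma_{\min}^2(\bA)$ coefficient after constant bookkeeping), and solving the resulting inequality $a\Norm{\Delta}_F^2 \le b\,\tau_2\sqrt{\text{rank}(\bM_\star)}\,\Norm{\Delta}_F$ for $\Norm{\Delta}_F$ gives $\Norm{\Delta}_F^2 \lesssim \text{rank}(\bM_\star)\,\tau_2^2/a^2$. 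Substituting the prescribed $\tau_2 \asymp (nD)^{-1}\{(\tilde{U}_{\alpha})^{1/2}\vee 1/\delta\}\{\gamma^{1/2}+\log^{3/2}(n\vee D)\}$ and tracking the $p_{\min}$, $\tilde{L}_{\alpha}$, $\alpha$ factors produces \eqref{C1_eq}, with the failure probability $4/(n+D)$ inherited from the two concentration lemmas. The remaining work is careful constant bookkeeping to land the precise left-hand coefficients and the bracketed factor $\{\alpha^2 + (\tilde{U}_{\alpha}\vee 1/\delta^2)/\tilde{L}_{\alpha}^2\}$ on the right.
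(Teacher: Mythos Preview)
Your proposal is correct and follows essentially the same route as the paper: basic inequality, operator-norm control of $\nabla\tilde{\ell}(\bM_\star)$ via matrix concentration (the paper's Lemma~\ref{Gradient Bound}), the cone condition, and an RSC-type step over that cone (the paper's Lemma~\ref{Bound}, adapted from \cite{alaya2019collective}). The only cosmetic difference is that the paper carries the calibration term $-\tau_1\Norm{\bA\Delta_X}_F^2$ as a separate negative right-hand-side term and builds it into a modified cone condition (Lemma~\ref{nuclear}), whereas you fold it directly into the curvature lower bound; the two bookkeepings are equivalent and both deliver the extra $\tfrac{8\tau_1}{\tilde{L}_\alpha p_{\min}}\sigma_{\min}^2(\bA)$ coefficient on $\Norm{\Delta_X}_F^2$.
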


Theorem~\ref{T1} implies that our recovered error for the target matrix, in the sense of squared Frobenius norm, is bounded by the right hand side of \eqref{C1_eq}, with probability approaching $1$ when $n$ and $D$ are large enough. 
Suppose the feature matrix is also regarded as a response matrix from Gaussian noise with unit variance, by directly applying the Theorem 7 in \cite{alaya2019collective}, we have $(nD)^{-1}(\norm{\wh{\bX} - \bX_{\star}}_{F}^{2}     +        \norm{\wh{\bZ} - \bZ_{\star}}_{F}^{2} )$ less than the terms in the second line of \eqref{C1_eq}. 
Fortunately, with the help of calibration information, we have a constant order improvement.  Specifically, together with Assumption \ref{A4}, we have
\begin{align}
   &\frac{1}{nD} \Norm{\wh{\bX} - \bX_{\star}}_{F}^{2}     +        \frac{1}{nD}\Norm{\wh{\bZ} - \bZ_{\star}}_{F}^{2} \notag\\
  \stackrel{(i)}{<}&
 \left\{\frac{1}{nD} + \frac{8\tau_{1}}{\tilde{L}_{\alpha}p_{\min}  }\sigma_{\min}^{2}(\bA)  \right\} \Norm{\wh{\bX} - \bX_{\star}}_{F}^{2}     +        \frac{1}{nD}\Norm{\wh{\bZ} - \bZ_{\star}}_{F}^{2} , \notag
\end{align}
where the inequality $(i)$ is strict, which is one of the main theoretical contributions of this paper.

\section{Experiments}

We conduct a simulation study to illustrate the performance of the  proposed method. Let $\bP \in \mathbb{R}^{n \times r}$ and $\bQ \in \mathbb{R}^{d \times r}$ whose entries are generated independently from a uniform distribution  over $(0,1)$. Then, let $\bX_{\star}^{0} = \bP \bQ\trans$ and $\bX_{\star} = \bX_{\star}^{0} / \norm{ \bX_{\star}^{0}  }_{\infty}$. 
We further generate coefficient matrix $\bW^{(s)} \in \mathbb{R}^{d \times m_{s}}$ whose elements are independently generated from a uniform distribution over $(0,1)$ for $s = 1, 2, 3$.  Let $\wt{\bZ}_{\star}^{(s)} = \bX_{\star}\bW^{(s)}$. We call this setting as ``linear'' case. By normalization, we have $\bZ_{\star}^{(s)} = \wt{\bZ}_{\star}^{(s)} / \norm{\wt{\bZ}_{\star}^{(s)}}_{\infty}$.
Specifically, $\bY^{(1)}$ has Bernoulli entries with support $\{0, 1\}$, $\bY^{(2)}$ has Poisson entries and $\bY^{(3)}$ has Gaussian entries with known $\sigma^{2} = 1$. All the link between $\bY^{(s)}$ and ${\bZ^{(s)}}$ are the same as those in Section 2. For calibration information, suppose we know the column means for $\bX_{\star}$, i.e., $\bA = (1/n)\bone_{1\times n}$ and $\bB = \bA\bX_{\star}$. On the other hand, we have a ``nonlinear'' case, i.e., we assume that $\wt{\bZ}_{\star}^{(s)}$ is generated by an element-wise nonlinear transformation of $\bX_{\star}$. Specifically, let $z_{\star,ij}^{(s)} = t^{(s)}(x_{\star,ij})$, where $t^{(1)}(x) = x^2 +x +0.5$, $t^{(2)}(x) = -x^2 -x$, $t^{(3)}(x)= -x^2 - 2x +0.2$. The normalization procedure is the same as the ``linear'' case. The proposed method TMCC is compared  with three other approaches.

1. \textbf{MC\_0}: Exactly the same as modified TMCC except for the gradient updating procedure. No calibration information is considered. Therefore in $\partial f_{\tau_{1}}(\bM^{\dagger})_{ij}$
% \eqref{gradient_fpc}
the term $2\tau_{1}(\bA\trans\bA\bX^{\dagger} - \bA\trans\bB)_{ij}$ is eliminated. 

2. \textbf{CMC\_SI}: Collective matrix completion (\textbf{CMC}) \cite{alaya2019collective} is used to complete the parameters for the target matrix, and Soft-Impute (\textbf{SI}) method from \cite{mazumder2010spectral} is used to complete the feature matrix separately.

3. \textbf{TS}: A two-stage method, where, at the first stage, only the feature matrix is imputed by the Soft-Impute method, and at the second stage, the method \textbf{MC\_0} is applied to the concatenated matrix joined by the feature matrix and the observed response matrices.

Specifically, MC\_0 and TMCC share the same strategy, i.e. simultaneously recovering all matrices, while CMC\_SI chooses to recover separately and TS opts to recover step by step.

In the experiments, we set  $n = 1500$, $d = 500$, $m_{1} = m_{2} = m_{3} = 500$ and choose learning depth $K = 1000$ and stopping criterion $\kappa = 10^{-7}$.  Further, we compare different methods with  rank $r \in \{5, 10, 15\}$. The missing rate $\nu \in \{60\%, 80\%\}$ of both the feature matrix and response matrix are the same in each experiment. For TMCC, we tune  $\tau_1$ and $\tau_2$ on one independent validation set and apply the same parameters to all other repeated 50 simulations. Further, other compared methods employ the same procedure as TMCC while they only have to be tuned for the parameter $\tau_2$.

The performance of each method is evaluated via the mean value and standard deviation of the  relative errors (RE) based on repeated experiments. Specifically,   the relative error of a recovered feature matrix is $\mbox{RE}(\wh{\bX}) = \norm{\wh{\bX} - \bX_{\star} }_{F}/\norm{\bX_{\star}}_{F}$ and that of target matrix  $\mbox{RE}(\wh{\bZ}) = \norm{\wh{\bZ} - \bZ_{\star} }_{F}/\norm{\bZ_{\star}}_{F}$. Experiment results are summarized in Fig~\ref{fig_feature} and Fig~\ref{fig_target}.

\begin{figure}[h!]
     \centering
    %\vspace{.3in}
    \includegraphics[width = 15cm]{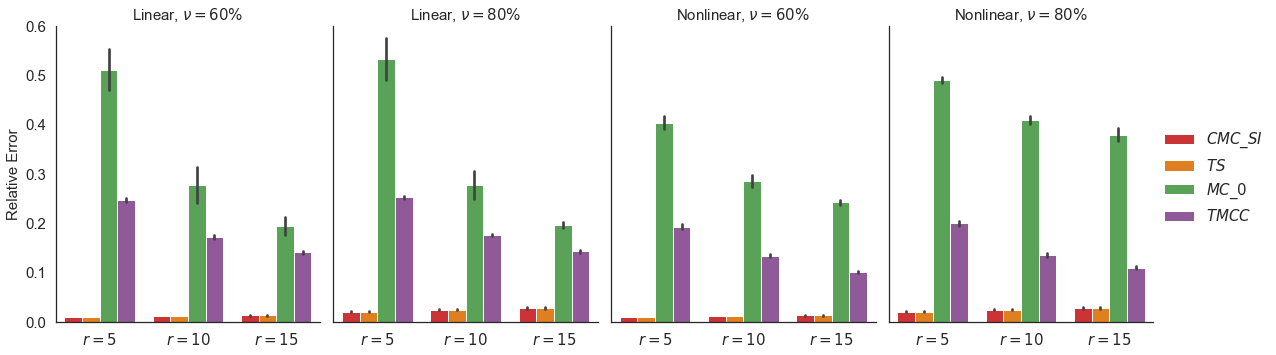}
%\vspace{.3in}
    \caption{Relative Error of Feature Matrix (with the Black Lines Representing ± the Standard Error)} \label{fig_feature}
\end{figure}

\begin{figure}[h!]
     \centering
    %\vspace{.3in}
    \includegraphics[width = 15cm]{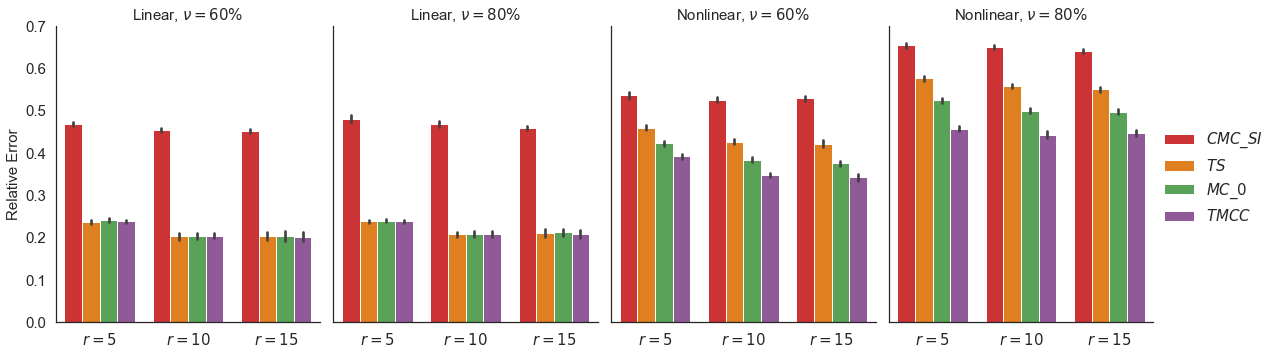}
%\vspace{.3in}
    \caption{Relative Error of Target Matrix (with the Black Lines Representing ± the Standard Error)} \label{fig_target}
\end{figure}

In Fig~\ref{fig_feature}, it is an unsurprising fact that  TMCC surpasses MC\_0 with the help of calibration information. For instance, in the linear case with  $\nu = 60\%$, the mean of RE of $\hat\bX$ by MC\_0 is 0.51 with standard error (SE) 0.0417 when $r = 5$ while that by TMCC is 0.25 with SE 0.0040. TMCC's RE is less than half of MC\_0's and situations in other cases are alike. Besides, CMC\_SI and TS perform the best in all scenarios. Specifically, the mean of RE of $\hat\bX$ by CMC\_SI is 0.01 with SE 0.0005, and that by TS is 0.01 with SE 0.0006 when $r = 10$ in the nonlinear case with $\nu = 60\%$.  
It is because both CMC\_SI and TS complete the feature matrix without considering the target matrix. However, our primary goal is to recover $\bZ_{\star}$, i.e., achieving a low relative error of $\hat{\bZ}$. In Fig~\ref{fig_target}, the far lower relative error of $\hat{\bX}$  by CMC\_SI and TS does not bring a lower relative error of $\hat{\bZ}$. Of all the four approaches, CMC\_SI is the only one that  fails to  take advantage of the feature matrix to recover the target matrix. That is why it attains the greatest relative error of the target matrix.
For example, its mean of RE of $\hat\bZ$ is 0.46 with SE 0.0046 when $r= 15$ in the linear case with $\nu = 80\%$ and 0.53 with SE 0.0057 when $r=15$ in the nonlinear case with $\nu = 60\%$, more than twice of the other three methods.
Besides,
TS, MC\_0, and TMCC behave similarly in the linear case while they display great differences in the nonlinear case. When the relationship between $\bX_{\star}$ and $\bZ_{\star}$ is not linear, simultaneously recovering demonstrates great strength compared with recovering step by step and recovering  separately. 
In the nonlinear case with $\nu = 80\%$, the means of RE of $\hat\bZ$  by CMC\_SI, TS, MC\_0, and TMCC are 0.66, 0.58, 0.52, and 0.46 respectively when $r = 5$. Situations in other cases are almost the same.
What is noteworthy is that TMCC also overtakes MC\_0 with respect to target matrices, which implies the power of calibration information again. Overall, relatively low standard errors indicate the stability of these algorithms.

\section{Conclusion}

We proposed a statistical framework  for multi-task learning under exponential family matrix completion framework with known calibration information. Statistical guarantee   of our proposed estimator has been shown and a constant order improvement is achieved compared with existing methods.  We have also shown that the proposed algorithm has a convergence rate of $O(1/k^{2})$. The simulation study also shows that our proposed method has numerical benefits.

\bibliographystyle{IEEEbib}
\bibliography{ref_2}

\newpage
\onecolumn
\appendix

\section*{Appendix}
In this appendix, we provide technical proofs and computational time comparisons for our main article. Specifically, it is organized as follows:
\begin{itemize}
	\item Section \ref{SectionA} introduces necessary preliminary;
	\item Section \ref{SectionB} presents proof of Theorem \ref{algorithm convergence};
	\item Section \ref{SectionC} presents proof of Theorem \ref{T1};
    \item Section \ref{SectionD} provides auxiliary lemmas;
    \item Section \ref{SectionE} provides computational time comparison for the simulations.
\end{itemize}

\renewcommand{\theequation}{\thesection.\arabic{equation}}

\section{PRELIMINARY}\label{SectionA}

In the following, we investigate the theoretical properties of the proposed \textit{TMCC} algorithm. Before going through those theoretical results, we first introduce some useful matrix norm inequalities. For two matrices $\bS$ and $\bT$ with the same dimension, define the inner product  in terms of the trace of the matrix product, i.e., $\langle \bS, \bT \rangle = \mbox{tr}(\bS\trans\bT)$. We have trace duality inequality
\begin{align}\label{trace inequality}
  \Abs{ \mbox{tr}(\bS\trans\bT)} \leq \Norm{\bS} \Norm{\bT}_{\star},
\end{align}
and bound for nuclear norm
\begin{align}\label{nuclear and F}
  \Norm{\bS}_{\star} \leq \sqrt{\mbox{rank}(\bS)}\Norm{\bS}_{F}.
\end{align}
Further, 
\begin{align}\label{F norm inequality}
 \sigma_{\min}(\bS) \Norm{\bT}_{F}\leq \Norm{\bS\bT}_{F},
\end{align}
where the proof of \eqref{F norm inequality} is presented in Lemma \ref{lambda min}. Suppose the SVD of $\bS$ is $\bU_{\bS}\Sigma_{\bS}\bV\trans_{\bS}$.
Let $\wt{\bP}_\bS = \bS(\bS\trans\bS)^{-1}\bS\trans $ be the projection matrix based on $\bS$.  Define 
$\mathcal{P}_{\bS}^{\perp}(\bT) = \wt{\bP}_{\bU_{\bS}^{\perp}}\bT \wt{\bP}_{\bV_{\bS}^{\perp}}$, where
$\bF^{\perp} = (\bI - \wt{\bP}_{ \bF })\bF$ for a matrix $\bF$. Notice that $\mathcal{P}_{\bS}^{\perp}(\bT) $ is not a projection since it is not idempotent. 
%$\bU_{\bS}^{\perp} = (\bI - \wt{\bP}_{ \bU_{\bS} })\bU_{\bS} $
 Let $\mathcal{P}_{\bS}(\bT) = \bT - \mathcal{P}_{\bS}^{\perp}(\bT)$ and we have
\begin{align}\label{projection rank}
 \mbox{rank} \left\{\mathcal{P}_{\bS}(\bT)\right\} \leq 2\mbox{rank}(\bS).
\end{align}
By \cite{klopp2014noisy}, we have
\begin{align}\label{nuclear difference}
 \Norm{\bS}_{\star} - \Norm{\bT}_{\star} \leq \Norm{ \mathcal{P}_{\bS}(\bS - \bT)  }_{\star} + \Norm{ \mathcal{P}_{\bS}^{\perp}(\bS - \bT)  }_{\star}.
\end{align}
Finally, we define the Bregman divergence.
\begin{definition}
Let $\mathcal{C}$ be a closed set. For a continuously-differentiable function $F : \mathcal{C} \rightarrow \mathbb{R}$, the Bregman divergence associated with $F$ between $x, y \in \mathcal{C}$ is
\begin{align}
d_{F}(x, y) = F(x) - F(y) - 
\langle   \nabla F(y), x - y   \rangle. \notag
\end{align}
\end{definition}
Mathematically, $d_{F}(x,y)$ is equivalent to the first-order Taylor expansion error of $F(x)$ evaluated at $y$.

Denote $\bSigma_{R}$ to be an  $n \times D$ matrix, consisting of $i.i.d.$ Rademacher sequences $\{\xi_{x,ij}:i\in [n], j \in [d]\}$  and $\{\xi_{y,ij}^{(s)}: i \in [n], j\in [m_{s}]\}$ for $s \in [S]$.  Specifically,
\begin{align}
  \bSigma_{R} =&\sum_{(i,j)}^{(n, d)}\xi_{x,ij} r_{x,ij}\bE_{x,ij} +\sum_{s=1}^{S} \sum_{(i,j)}^{(n, m_{s})} \xi_{y,ij}^{(s)} r_{y,ij}^{(s)}\bE_{y,ij}^{(s)},\notag
\end{align}
where $\bE_{x,ij} = \be_{i}(n) \be_{j}(D)\trans$ and $\bE_{y,ij}^{(s)} =  \be_{i}(n) \be_{\tilde{j}}(D)\trans$ are $\mathbb{R}^{n\times D}$ matrices lying in the set of canonical bases, $\be_{i}(l)$ is the $j$-th unit vector of length $l$ and $\tilde{j} = j + d + \sum_{t=1}^{s-1} m_{t} $.

\section{PROOF OF THEOREM 1}\label{SectionB}

\begin{proof}
 Given two arbitrary matrices $\bS$, $\bT \in \mathbb{R}^{n \times D}$, we have 

\begin{align}
&\Abs{f_{\tau_{1}}(\bS) - 
f_{\tau_{1}}(\bT)} \notag\\
=& 
\left\vert\left\{\tilde{\ell}(\bS) - \tilde{\ell}(\bT)\right\} -
\tau_{1} \left\{ \Norm{\bA\left( \bS - \bM_{\star}\right)}_{ F}^{2} - 
 \Norm{\bA\left(\bT - \bM_{\star}\right)}_{ F}^{2} \right\} \right\vert \notag\\
\leq& \Abs{\tilde{\ell}(\bS) - \tilde{\ell}(\bT)} + 
\tau_{1} \Abs{\left\{ \Norm{\bA\left( \bS - \bM_{\star}\right)}_{ F}^{2} - \Norm{\bA\left(\bT - \bM_{\star}\right)}_{ F}^{2}
  \right\}
} \notag \\
\leq& \Norm{\nabla \tilde{\ell}\left(\wt{\bM}\right)} \Norm{\bS-\bT}_{\star}
+  \tau_{1}\sigma_{1}^{2}(\bA)\Norm{\bS - \bT}_{ F} \Norm{\bS + \bT - 2\bM_{\star}}_{ F}\notag \\
\leq& \left\{  \sqrt{n \wedge D}\Norm{\nabla \tilde{\ell}\left(\wt{\bM}\right)}
 + 2 \tau_{1}\sigma_{1}^{2}(\bA)\sqrt{nD\alpha}
 \right\}
 \Norm{\bS - \bT}_{ F},\notag
\end{align}

where $\wt{\bM} \in \mathcal{G}_{\infty}(\alpha)$. We focus on bound of $\norm{\nabla \tilde{\ell}(\wt{\bM})  }$. 
\begin{align}
&\Norm{\nabla \tilde{\ell}\left(\wt{\bM}\right)}\notag\\
 \leq &\Norm{\nabla \tilde{\ell}(\bM_{\star})} +
\Norm{ \nabla \tilde{\ell}(\wt{\bM}) - \nabla \tilde{\ell}(\bM_{\star})}\notag \\
\leq& \Norm{\nabla \tilde{\ell}(\bM_{\star})}+
\frac{1}{nD}   \left\Vert \sum_{s=1}^{S}\sum_{(i,j) \in [n] \times [m_{s}]}r_{y,ij}^{(s)}\bE_{y,ij}^{(s)} \times \left\{(g^{(s)})'(\tilde{z}_{\star,ij}^{(s)}) - (g^{(s)})'(z_{\star,ij}^{(s)})\right\} \right.\notag\\
& 
\left.\quad\quad\quad\quad\quad\quad\quad\quad\quad\quad\quad\quad
+\sum_{(i,j) \in [n] \times [d]} r_{x,ij}\bE_{x,ij}\left\{
 g_{N}'(\tilde{x}_{ij}) - g_{N}'(x_{ij}) \right\}
\right\Vert
\notag \\
\leq& \Norm{\nabla \tilde{\ell}(\bM_{\star})} + 
\frac{\tilde{U}_{\alpha}}{nD} \left\Vert  \sum_{s=1}^{S}\sum_{(i,j) \in [n] \times [m_{s}]}r_{y,ij}^{(s)}\bE_{y,ij}^{(s)} \Abs{\tilde{z}_{\star,ij}^{(s)} - z_{\star,ij}^{(s)}}
+
\sum_{(i,j) \in [n] \times [d]} r_{x,ij}\bE_{x,ij}
 \Abs{\tilde{x}_{ij} - x_{ij} }
\right\Vert \notag \\
\leq& \frac{ c_{2}(\sqrt{\tilde{U}_{\alpha}} \vee 1/\delta )\left[ \sqrt{\gamma}  + \left\{\log (n\vee D) \right\}^{3/2} \right]       }{nD} +
\frac{2\sqrt{nD\alpha}\tilde{U}_{\alpha}}{nD}\notag \\
\leq& \frac{c_{3}}{\sqrt{nD}}.\notag
\end{align}
Therefore, as long as 
$\tau_{1} \leq  c_{1}\{\sigma_{1}(\bA)\}^{-2}(nD\alpha)^{-1/2}$, there exists positive constant $\tilde{L} = 2c_{1} + c_3$ such that
$f_{\tau_{1}}(\cdot)$ is $\tilde{L}$-Lipchitz. The remaining proof can be obtained by followed the proof of Theorem 4.1 in \cite{ji2009accelerated}.
\end{proof}

\section{PROOF OF THEOREM 2} \label{SectionC}

\begin{proof}
In general, we use the connection between exponential family distributions and Bregman divergence to argue the basic inequality implied by our estimation procedure. Then, a key quantity can be bounded  and the results can be obtained by the standard argument for matrix completion. Specifically, by basic inequality, we have 
\begin{align}
 &\frac{1}{nD} \left[\sum_{s=1}^{S} \sum_{(i,j) \in [n] \times [m_{s}]} r_{y,ij}^{(s)} \left\{   - y_{ij}^{(s)}\hat{z}_{ij}^{(s)} + g^{(s)}(\hat{z}_{ij}^{(s)}  )   \right\} +  
  \sum_{(i,j) \in [n] \times [d]}   \frac{r_{x,ij} }{2} (\hat{x}_{ij} -x_{ij})^{2}   \right] \notag\\
  &\quad\quad +
 \tau_{1}\Norm{\bA\wh{\bX} - \bB   }_{F}^{2}+
  \tau_{2} \Norm{  \wh{\bM}}_{\star} \notag \\
 \leq& \frac{1}{nD} \left[\sum_{s=1}^{S} \sum_{(i,j) \in [n] \times [m_{s}]} r_{y,ij}^{(s)} \left\{   - y_{ij} ^{(s)}z_{\star,ij} + g^{(s)}(z_{\star,ij}^{(s)}  )   \right\}+  
  \sum_{(i,j) \in [n] \times [d]}   \frac{r_{x,ij}}{2} (x_{\star,ij} -x_{ij})^{2}       \right] \notag\\
  &\quad\quad +\tau_{1} \Norm{\bA\bX_{\star} - \bB   }_{F}^{2}  + \tau_{2} \Norm{  \bM_{\star}}_{\star} \notag \\
  =& \frac{1}{nD} \left[\sum_{s=1}^{S} \sum_{(i,j) \in [n] \times [m_{s}]} r_{y,ij}^{(s)} \left\{   - y_{ij} ^{(s)}z_{\star,ij} + g^{(s)}(z_{\star,ij}^{(s)}  )   \right\}+
    \sum_{(i,j) \in [n] \times [d]}  \frac{r_{x,ij}}{2} (x_{\star,ij} -x_{ij})^{2}   \right] + \tau_{2} \Norm{  \bM_{\star} }_{\star},\notag
   \end{align}
   where the calibration information implies the last equality. Let $g_{N} (x) = x^{2}/2$, expanding quadratic terms of $x_{ij}$,  and we have
 \begin{align}
 &\frac{1}{nD} \left[\sum_{s=1}^{S} \sum_{(i,j) \in [n] \times [m_{s}]} r_{y,ij}^{(s)} \left\{   - y_{ij} ^{(s)}\hat{z}_{ij}^{(s)} + g^{(s)}(\hat{z}_{ij}^{(s)}  )   \right\} + 
  \sum_{(i,j) \in [n] \times [d]}  r_{x,ij}   \left\{   - x_{ij}\hat{x}_{ij} + g_{N}(\hat{x}_{ij})   \right\}  \right] 
 \notag\\
 &\quad\quad+
 \tau_{1}\Norm{\bA\wh{\bX} - \bB   }_{F}^{2} + \tau_{2} \Norm{  \wh{\bM}}_{\star} \notag \\
 \leq&
 \frac{1}{nD} \left[\sum_{s=1}^{S} \sum_{(i,j) \in [n] \times [m_{s}]} r_{y,ij}^{(s)} \left\{   - y_{ij} ^{(s)}z_{\star,ij} + g^{(s)}(z_{\star,ij}^{(s)}  )   \right\} + \sum_{(i,j) \in [n] \times [d]}  r_{x,ij} \left\{   - x_{ij}{x}_{\star,ij} + g_{N}(x_{\star,ij})   \right\} \right]+ \tau_{2} \Norm{  \bM_{\star}  }_{\star}.\notag
   \end{align}
 Rearranging the terms, we obtain
\begin{align}\label{transition_T1}
&\frac{1}{nD} \left( \sum_{s=1}^{S} \sum_{(i,j) \in [n] \times [m_{s}]} r_{y,ij}^{(s)} \left[  -y_{ij}^{(s)}\left(  \wh{z}_{ij}^{(s)} -    z_{\star,ij}^{(s)} \right) +  \left\{g^{(s)}(\wh{z}_{ij}^{(s)}) -   g^{(s)}(z_{\star,ij}^{(s)}) \right\}   \right] +   \right.\notag \\
&\left.  \quad  \sum_{(i,j) \in [n] \times [d]} r_{x,ij}\times \left[ -x_{ij}(\hat{x}_{ij} - x_{\star,ij}) 
+ \left\{ g_{N}(\hat{x}_{ij}) - g_{N}(x_{\star, ij})      \right\}            \right]    \right) \notag \\
\leq& -\tau_{1} \Norm{\bA\wh{\bX} - \bB}_{F}^{2} + \tau_{2} \left(   \Norm{\bM_{\star} }_{\star} - \Norm{\wh{\bM}}_{\star}   \right) \notag \\
=& -\tau_{1} \Norm{\bA  \left(\wh{\bX} - \bX_{\star}\right) }_{F}^{2} + \tau_{2} \left(   \Norm{\bM_{\star} }_{\star} - \Norm{\wh{\bM}}_{\star}   \right).
\end{align}
Plug in the Bregman divergence  into \eqref{transition_T1}  with \eqref{trace inequality}, and we have
\begin{align} \label{T1_transit_0}
&\frac{1}{nD} \left[ \sum_{s=1}^{S} \sum_{(i,j) \in [n] \times [m_{s}]}\left\{r_{y,ij}^{(s)} d_{g^{(s)}}\left(\wh{ z}_{ij}^{(s)}, z_{\star,ij}^{(s)} \right)\right\} + 
 \sum_{(i,j) \in [n] \times [d]}\left\{r_{x, ij}^{(s)} d_{g_{N}}\left(\wh{ x}_{ij} ,  x_{\star,ij} \right)\right\}
\right] \notag \\
\leq&   -\tau_{1} \Norm{\bA\wh{\bX} - \bB}_{F}^{2} +  \tau_{2} \left(   \Norm{\bM_{\star}}_{\star} - \Norm{\wh{\bM}}_{\star}   \right) -\notag \\
&  \quad \frac{1}{nD} \left[ \sum_{s=1}^{S} \sum_{(i,j) \in [n] \times [m_{s}]}r_{y,ij}^{(s)} \times \quad\left\{ \left(g^{(s)}\right)'(z_{\star,ij}^{(s)}) - y_{ij}^{(s)} \right\}
 \left( \wh{z}_{ij}^{(s)} - z_{\star,ij}^{(s)} \right)  + 
  \right. \notag\\
  &\left. \quad\quad\quad \sum_{(i,j) \in [n] \times [d]}r_{x,ij}\left\{ (g_{N})'(x_{\star,ij}) - x_{ij}\right\}  (\hat{x}_{ij} - x_{\star,ij} )       \right]\notag \\
 =& -\tau_{1} \Norm{\bA\wh{\bX} - \bB}_{F}^{2} +  \tau_{2} \left(   \Norm{\bM_{\star}}_{\star} - \Norm{\wh{\bM}}_{\star}   \right) -
 \langle \nabla \tilde{\ell}(\bM_{\star}), \wh{\bM} - \bM_{\star} \rangle \notag \\
 \leq& -\tau_{1} \Norm{\bA\wh{\bX} - \bB}_{F}^{2} +  \tau_{2} \left(   \Norm{\bM_{\star}}_{\star} - \Norm{\wh{\bM}}_{\star}   \right) + \Norm{\nabla \tilde{\ell}(\bM_{\star})}\Norm{ \wh{\bM} - \bM_{\star} }_{\star},
\end{align}
where $\tilde{\ell}(\bM_{\star}) =  (nD)^{-1} [  \ell(\bZ_{\star}) + \sum_{(i,j) \in [n] \times [d]}r_{x,ij}\times \{ -x_{ij}x_{\star,ij} + g_{N}(x_{\star,ij})\} ]$.
%\xj{What is $d_{g^{(s)}}\left(\wh{ z}_{ij}^{(s)} -  z_{\star,ij}^{(s)} \right)$}\hf{I forgot to define the Bregman divergence and messed up the notation.}
With an additional assumption that  $\tau_{2} \geq 2 \norm{\nabla \tilde{\ell}(\bM_{\star})}$, together with  \eqref{nuclear and F} and \eqref{projection rank}, \eqref{T1_transit_0} yields
\begin{align}\label{Bregman Result}
 &\frac{1}{nD} \left\{ \sum_{s=1}^{S} \sum_{(i,j) \in [n] \times [m_{s}]}r_{y, ij}^{(s)} d_{g^{(s)}}\left(\wh{ z}_{ij}^{(s)} ,  z_{\star,ij}^{(s)} \right) + \sum_{(i,j) \in [n] \times [d]} r_{x,ij} d_{g_{N}}(\hat{x}_{ij} , x_{\star,ij})  \right\} \notag \\
 \leq& \frac{3\tau_{2}}{2} \Norm{\mathcal{P}_{\bM} \left(\wh{\bM} - \bM_{\star} \right) }_{\star} - \tau_{1} \Norm{\bA\left(  \wh{\bX} - \bX_{\star}   \right)}_{F}^{2}
\notag \\
\leq& \frac{3\tau_{2}}{2} \sqrt{  2\mbox{rank}(\bM_{\star}) } \Norm{ \wh{\bM} - \bM_{\star}  }_{F} - \tau_{1} \Norm{\bA\left(  \wh{\bX} - \bX_{\star}   \right)}_{F}^{2}.
\end{align}
%\xj{Define the inner product of two matrices.}\hf{Now I've defined that at the beginning of the supplementary material.}
Meanwhile, as $L_{\alpha} (x-y)^{2} \leq 2 d_{g^{(s)}} (x,y) \leq U_{\alpha}(x-y)^{2}$ and $2d_{g_{N}}(x,y) = (x-y)^{2}$, we have 
\begin{align}
 \Delta_{\tilde{\ell}}^{2} \leq &\frac{2}{\tilde{L}_{\alpha}nD} \left\{ \sum_{s=1}^{S}\sum_{(i,j) \in [n] \times [m_{s}]}r_{y,ij}^{(s)}d_{g^{(s)}}\left(\wh{z}_{ij}^{(s)}, z_{\star,ij}^{(s)}\right)
   +  \sum_{(i,j) \in [n] \times [d]}r_{x,ij}d_{g_{N}}(\hat{x}_{ij},  x_{\star,ij})  \right\},\notag
\end{align}
where 
\begin{align}\label{MSE}
 \Delta_{\tilde{\ell}}^{2} =& \frac{1}{nD} \left\{ \sum_{s=1}^{S}\sum_{(i,j) \in [n] \times [m_{s}]}r_{y,ij}^{(s)} \left(\wh{z}_{ij}^{(s)} -  z_{\star,ij}^{(s)}\right)^{2}
  + \sum_{(i,j) \in [n] \times [d]}r_{x,ij}\left( \hat{x}_{ij} - x_{\star,ij}  \right)^{2}  \right\}.
\end{align}
Put \eqref{Bregman Result} and \eqref{MSE} together, and we obtain
\begin{align}\label{key}
\Delta_{\tilde{\ell}}^{2} \leq& \frac{2}{\tilde{L}_{\alpha}} \left\{  \frac{3\tau_{2}}{2} \sqrt{  2\mbox{rank}(\bM_{\star})  } \Norm{ \wh{\bM} - \bM_{\star}  }_{F}  -\tau_{1} \Norm{\bA\left(  \wh{\bX} - \bX_{\star}   \right)}_{F}^{2}    \right\}.
\end{align}

\begin{comment}
\begin{remark}
The second term is the key term we need to handle in this article. If there's no such term, or a degenerated cases, the results from \cite{alaya2019collective} will hold. As a result, the second term ensures that the optimized approach is not worse than the optimization results in \cite{alaya2019collective}. 
\end{remark}
\end{comment}

With Lemma \ref{nuclear} and \ref{Bound} in the Appendix, follow the proof of Theorem 3 in \cite{alaya2019collective}, our main theorem is proved.

\end{proof}

\section{AUXILIARY LEMMAS}\label{SectionD}
\begin{lemma}\label{nuclear}
Let  arbitrary matrices $ \bS, \bT \in \mathcal{G}_{\infty}(\alpha)$. Assume that $\tau_{2} \geq 2\norm{\nabla \tilde{\ell}(\bT)}$ 
and $\tilde{\ell}(\bS) + \tau_{2}\norm{\bS}_{\star} \leq \tilde{\ell}(\bT) - \tau_{1} \norm{\bA( \bS_{[d]} - \bT_{[d]} )  }_{F}^{2}  + \tau_{2}\norm{\bT}_{\star}$, where  $\bS_{[d]}$ consists of the first $d$ columns of $\bS$. Then we have
\begin{enumerate}
\item $\Norm{ \mathcal{P}_{\bT}^{\perp}(\bS - \bT) }_{\star}  \leq
 3 \Norm{  \mathcal{P}_{\bT}(\bS - \bT)   }_{F} - 
 2\tau_{1}\tau_{2}^{-1} \sigma_{\min}^{2}(\bA)\Norm{ \bS_{[d]} - \bT_{[d]}}_{F}^{2}     $,
 \item $\Norm{\bS - \bT}_{\star} \leq 4\sqrt{2\mbox{rank}(\bT)} \Norm{\bS - \bT}_{F} - 
 2\tau_{1}\tau_{2}^{-1} \sigma_{\min}^{2}(\bA)\Norm{ \bS_{[d]} - \bT_{[d]}}_{F}^{2}     $.
\end{enumerate}
\end{lemma}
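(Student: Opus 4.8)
The plan is to start from the basic inequality in the hypothesis and bound its left-hand side from below using (i) convexity of $\tilde{\ell}$, (ii) trace duality, and (iii) the orthogonal decomposition of the nuclear norm, arranging the work so that the calibration term $-\tau_1\norm{\bA(\bS_{[d]}-\bT_{[d]})}_F^2$ is carried through as a strictly negative contribution that only tightens the resulting cone inequality. First I would rearrange the hypothesis to $\{\tilde{\ell}(\bS)-\tilde{\ell}(\bT)\} + \tau_2(\norm{\bS}_\star - \norm{\bT}_\star) \leq -\tau_1\norm{\bA(\bS_{[d]}-\bT_{[d]})}_F^2$.

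For the two terms on the left I would produce lower bounds separately. Because $\bS,\bT\in\mathcal{G}_\infty(\alpha)$, all entries (and the segment joining them) lie in $\mathcal{D}$, where $g^{(s)}$ is convex by Assumption~\ref{A3} and $g_N(x)=x^2/2$ is convex; hence $\tilde{\ell}$ is convex and $\tilde{\ell}(\bS)-\tilde{\ell}(\bT)\geq\langle\nabla\tilde{\ell}(\bT),\bS-\bT\rangle$. The trace-duality inequality~\eqref{trace inequality} with the hypothesis $\tau_2\geq 2\norm{\nabla\tilde{\ell}(\bT)}$ then gives $\langle\nabla\tilde{\ell}(\bT),\bS-\bT\rangle\geq -\tfrac{\tau_2}{2}\norm{\bS-\bT}_\star$. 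For the nuclear-norm difference I would use orthogonal-support additivity, $\norm{\bT+\mathcal{P}_\bT^\perp(\bS-\bT)}_\star = \norm{\bT}_\star + \norm{\mathcal{P}_\bT^\perp(\bS-\bT)}_\star$, together with the triangle inequality to get $\norm{\bS}_\star-\norm{\bT}_\star \geq \norm{\mathcal{P}_\bT^\perp(\bS-\bT)}_\star - \norm{\mathcal{P}_\bT(\bS-\bT)}_\star$ (the lower-bound companion of~\eqref{nuclear difference}). Substituting all three, bounding $\norm{\bS-\bT}_\star\leq\norm{\mathcal{P}_\bT(\bS-\bT)}_\star+\norm{\mathcal{P}_\bT^\perp(\bS-\bT)}_\star$, and collecting terms collapses everything to $\tfrac{\tau_2}{2}\norm{\mathcal{P}_\bT^\perp(\bS-\bT)}_\star \leq \tfrac{3\tau_2}{2}\norm{\mathcal{P}_\bT(\bS-\bT)}_\star - \tau_1\norm{\bA(\bS_{[d]}-\bT_{[d]})}_F^2$; dividing by $\tau_2/2$ and using~\eqref{F norm inequality} with Assumption~\ref{A4} to replace $\norm{\bA(\bS_{[d]}-\bT_{[d]})}_F^2$ by $\sigma_{\min}^2(\bA)\norm{\bS_{[d]}-\bT_{[d]}}_F^2$ yields part~1.

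For part~2 I would write $\norm{\bS-\bT}_\star \leq \norm{\mathcal{P}_\bT(\bS-\bT)}_\star + \norm{\mathcal{P}_\bT^\perp(\bS-\bT)}_\star$, insert the cone bound from part~1 to obtain $\norm{\bS-\bT}_\star \leq 4\norm{\mathcal{P}_\bT(\bS-\bT)}_\star - 2\tau_1\tau_2^{-1}\sigma_{\min}^2(\bA)\norm{\bS_{[d]}-\bT_{[d]}}_F^2$, and then convert the leading nuclear norm to Frobenius by the rank bound~\eqref{projection rank}, which gives $\mbox{rank}\{\mathcal{P}_\bT(\bS-\bT)\}\leq 2\mbox{rank}(\bT)$, combined with~\eqref{nuclear and F} and $\norm{\mathcal{P}_\bT(\bS-\bT)}_F\leq\norm{\bS-\bT}_F$; this produces the stated factor $4\sqrt{2\mbox{rank}(\bT)}$.

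The step I expect to be most delicate is the bookkeeping on the calibration term: it must stay on the side of each inequality where it keeps its negative sign, never being absorbed into a triangle inequality that could flip it, so that it genuinely sharpens the cone condition rather than weakening it. I would also flag a minor discrepancy: the natural output of the argument above is the \emph{nuclear}-norm cone bound $\norm{\mathcal{P}_\bT^\perp(\bS-\bT)}_\star \leq 3\norm{\mathcal{P}_\bT(\bS-\bT)}_\star - 2\tau_1\tau_2^{-1}\sigma_{\min}^2(\bA)\norm{\bS_{[d]}-\bT_{[d]}}_F^2$, and it is this version that part~2 actually consumes; obtaining a Frobenius norm on the right of part~1 as literally written would require an extra application of~\eqref{nuclear and F} (introducing a $\sqrt{2\mbox{rank}(\bT)}$ factor), so I would state part~1 in the nuclear norm.
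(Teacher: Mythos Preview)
Your proposal is correct and follows essentially the same route as the paper: rearrange the hypothesis, lower-bound $\tilde{\ell}(\bS)-\tilde{\ell}(\bT)$ via convexity and trace duality with $\tau_2\geq 2\norm{\nabla\tilde{\ell}(\bT)}$, lower-bound $\norm{\bS}_\star-\norm{\bT}_\star$ by the orthogonal decomposition, split $\norm{\bS-\bT}_\star$ by the triangle inequality, collect into the cone inequality, and then pass to part~2 via~\eqref{projection rank} and~\eqref{nuclear and F}. Your observation about the Frobenius-versus-nuclear norm on the right of part~1 is also accurate: the paper's own derivation in fact produces $3\norm{\mathcal{P}_\bT(\bS-\bT)}_\star$ (nuclear), and it is this version that feeds into part~2.
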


\begin{proof}[Proof]
Rearranging the terms, we have 
\begin{align}\label{L31}
  \tau_{2}\left(\Norm{\bS}_{\star} - \Norm{\bT}_{\star} \right) \leq \tilde{\ell}(\bT) - \tilde{\ell}(\bS) - \tau_{1} \Norm{\bA( \bS_{[d]} - \bT_{[d]} )  }_{F}^{2}.
\end{align}
By convexity of $\tilde{\ell}(\cdot)$ and \eqref{trace inequality}, we have
\begin{align}\label{L32}
\tilde{\ell}(\bT) - \tilde{\ell}(\bS)  \leq \langle  \nabla\tilde{\ell}(\bT), \bT - \bS \rangle 
\leq \Norm{\nabla\tilde{\ell}(\bT)} \Norm{\bT - \bS}_{\star}  \leq   \frac{\tau_{2}}{2}\Norm{\bT-\bS}_{\star}.
\end{align}
Plug \eqref{L32} in \eqref{L31} with \eqref{nuclear difference}, and we obtain
\begin{align}
\tau_{2}\Norm{ \mathcal{P}_{\bT}^{\perp}(\bS - \bT)}_{\star} - \tau_{2}\Norm{ \mathcal{P}_{\bT}(\bS - \bT)}_{\star}
\leq  \frac{\tau_{2}}{2}\Norm{\bT-\bS}_{\star} - \tau_{1} \Norm{\bA( \bS_{[d]} - \bT_{[d]} )  }_{F}^{2}.\notag
\end{align}
Rearranging terms, we have
\begin{align}
 &\Norm{ \mathcal{P}_{\bT}^{\perp}(\bS - \bT)}_{\star} \notag\\
 \leq&  3  \Norm{ \mathcal{P}_{\bT}(\bS - \bT)}_{\star} - 
 \frac{2\tau_{1}}{\tau_{2}} \Norm{\bA( \bS_{[d]} - \bT_{[d]} )  }_{F}^{2} \notag \\
 \leq&  3  \Norm{ \mathcal{P}_{\bT}(\bS - \bT)}_{\star} - 
 \frac{2\tau_{1}}{\tau_{2}} \sigma_{\min}^{2}(\bA) \Norm{( \bS_{[d]} - \bT_{[d]} )  }_{F}^{2},\notag
\end{align}
where the first part is proved.
For the second part, we have
\begin{align}
 &\Norm{\bS - \bT}_{\star} \notag\\
 =& \Norm{  \mathcal{P}_{\bT}^{\perp} (\bS - \bT) + \mathcal{P}_{\bT} (\bS - \bT)   }_{\star}\notag\\
 \leq& \Norm{  \mathcal{P}_{\bT}^{\perp} (\bS - \bT)}_{\star} + \Norm{  \mathcal{P}_{\bT} (\bS - \bT)}_{\star} \notag \\
 \leq& 4\Norm{  \mathcal{P}_{\bT} (\bS - \bT)}_{\star}  - \frac{2\tau_{1}}{\tau_{2}} \sigma_{\min}^{2}(\bA) \Norm{( \bS_{[d]} - \bT_{[d]} )  }_{F}^{2} 
 \notag\\
 \leq& 4\sqrt{2\mbox{rank}(\bT)} \Norm{\bS - \bT}_{F} - 
 \frac{2\tau_{1}}{\tau_{2}} \sigma_{\min}^{2}(\bA)\Norm{ \bS_{[d]} - \bT_{[d]}}_{F}^{2}.\notag
\end{align}
\end{proof}

\begin{lemma}\label{lambda min}
Given $\bS$ and $\bT$ of arbitrary matrices with the same dimension,  
\begin{eqnarray}
  \sigma_{\min}(\bS) \Norm{\bT}_{F}  \leq \Norm{\bS\bT}_{F}.\notag
\end{eqnarray}
\end{lemma}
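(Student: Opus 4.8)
The plan is to reduce the matrix inequality to a statement about the singular value decomposition of $\bS$ and then exploit the orthogonal invariance of the Frobenius norm. Writing the full SVD $\bS = \bU\bSigma\bV\trans$, with $\bU$ and $\bV$ orthogonal and $\bSigma$ rectangular-diagonal carrying the singular values $\sigma_1 \ge \cdots \ge \sigma_{\min}$ along its main diagonal, I would first invoke the fact that left-multiplication by an orthogonal matrix preserves the Frobenius norm (since $\bU\trans\bU = \bI$ gives $\Norm{\bU\bM}_F^2 = \mbox{tr}(\bM\trans\bU\trans\bU\bM) = \Norm{\bM}_F^2$), so that $\Norm{\bS\bT}_F = \Norm{\bU\bSigma\bV\trans\bT}_F = \Norm{\bSigma\bV\trans\bT}_F$.

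Next I would set $\bW = \bV\trans\bT$ and observe that, because $\bV$ is orthogonal, $\Norm{\bW}_F = \Norm{\bT}_F$; thus it suffices to show $\Norm{\bSigma\bW}_F \ge \sigma_{\min}\Norm{\bW}_F$. As $\bSigma$ is diagonal, $\Norm{\bSigma\bW}_F^2 = \sum_{i,j}\sigma_i^2 w_{ij}^2$, and bounding every diagonal factor below by $\sigma_{\min}^2$ yields $\Norm{\bSigma\bW}_F^2 \ge \sigma_{\min}^2\sum_{i,j}w_{ij}^2 = \sigma_{\min}^2\Norm{\bW}_F^2$. Taking square roots and chaining the equalities from the previous step gives the claim. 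An equivalent route, slightly cleaner to write out, is to split $\bT$ into its columns $\bt_1,\dots,\bt_p$, apply the elementary vector inequality $\Norm{\bS\bt}_2 \ge \sigma_{\min}(\bS)\Norm{\bt}_2$ to each, and then sum the squares; that vector inequality is itself just the single-column instance of the SVD argument above.

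The one point that needs care---and the main place the argument could go wrong---is the indexing of the diagonal of $\bSigma$. Bounding $\sigma_i \ge \sigma_{\min}$ over the full range of diagonal entries is legitimate precisely when every coordinate of $\bW$ that contributes to $\Norm{\bW}_F$ is also scaled by some $\sigma_i$ in forming $\bSigma\bW$, i.e.\ when $\bSigma$ acts as a genuine diagonal scaling on all of $\bV\trans\bT$. I would therefore carry out the counting of diagonal entries under the convention fixed in the notation, where $\sigma_{\min}(\bS)$ denotes the $(n_1\wedge n_2)$-th singular value, and make the shape of $\bSigma$ explicit at that step; everything else in the proof is a one-line manipulation.
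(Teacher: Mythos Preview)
Your argument is correct and is essentially the same as the paper's: both use the SVD of $\bS$ and reduce the inequality to the lower bound $\bS\trans\bS \succeq \sigma_{\min}^2(\bS)\,\bI$ (you phrase this row-by-row as $\sigma_i^2 \ge \sigma_{\min}^2$, the paper phrases it via $\mbox{tr}(\bV_{\bS}\Sigma_{\bS}^2\bV_{\bS}\trans \bT\bT\trans)\ge \sigma_{\min}^2\,\mbox{tr}(\bT\bT\trans)$). The only cosmetic difference is that the paper also writes out an SVD of $\bT$, which is not actually needed; your version is slightly leaner, and your explicit flag about the shape of $\bSigma$ is exactly the subtlety one must track for the step to be valid.
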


\begin{proof}
By singular value decomposition, let  $\bS = \bU_{\bS}\Lambda_{\bS}\bV_{\bS}\trans $ and $\bT = \bU_{\bT}\Lambda_{\bT}\bV_{\bT}\trans$. Then by the cyclic property of trace operator,
\begin{align}
 \Norm{\bS\bT}_{F}^{2} &= \Norm{  \bU_{\bS}\Sigma_{\bS}\bV_{\bS}\trans\bU_{\bT}\Sigma_{\bT}\bV_{\bT}\trans   }_{F}^{2}\notag \\
 &= \mbox{tr} \left(  \bV_{\bT} \Sigma_{\bT}\bU_{\bT}\trans  \bV_{\bS} \Sigma_{\bS}\bU_{\bS}\trans
            \bU_{\bS}\Sigma_{\bS}\bV_{\bS}\trans\bU_{\bT}\Sigma_{\bT}\bV_{\bT}\trans \right) \notag \\
 &= \mbox{tr} \left(    \bV_{\bS} \Sigma_{\bS}^{2}\bV_{\bS}\trans\bU_{\bT}\Sigma_{\bT}\bV_{\bT}\trans \bV_{\bT} \Sigma_{\bT}\bU_{\bT}\trans  \right)\notag \\
 &\geq \sigma_{\min}^{2}(\bS) \mbox{Tr}\left( \bU_{\bT}\Sigma_{\bT}\bV_{\bT}\trans \bV_{\bT} \Sigma_{\bT}\bU_{\bT}\trans\right )  \notag \\
&= \sigma_{\min}^{2}(\bS) \Norm{\bT}_{F}^{2}. \notag
\end{align}
\end{proof}

The following three lemmas can be  adapted from \cite{alaya2019collective}.
\begin{lemma}[Modified Lemma 5 of \cite{alaya2019collective}]\label{Rademacher Matrix}
Suppose that Assumption 1 holds. There exists a positive constant $c$, such that
\begin{align}
\mathbb{ E } \left(\Norm{\bSigma_{R}} \right) \leq \frac{ c\left\{ \sqrt{ \gamma} + \sqrt{ \log{ (n \vee D)     }  } \right\}   }{nD}.\notag
\end{align}
\end{lemma}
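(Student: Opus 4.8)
The plan is to view $\bSigma_R$ as a single $n\times D$ random matrix with independent, mean-zero, uniformly bounded entries, and then to control $\mathbb{E}\Norm{\bSigma_R}$ by a sharp non-asymptotic bound on the expected spectral norm of such matrices, whose only inputs are the maximal entry size and the maximal row/column variance sums. First I would record the entrywise structure: on the feature block $j\in[d]$ the $(i,j)$ entry is $\xi_{x,ij}r_{x,ij}$, and on the $s$-th response block it is $\xi_{y,ij}^{(s)}r_{y,ij}^{(s)}$. Because the Rademacher signs are independent of the observation indicators, each entry is mean-zero, bounded by $1$ in absolute value, and has variance $\mathbb{E}(\xi^2 r^2)=\pi_{x,ij}$ (resp.\ $\pi_{y,ij}^{(s)}$), using $\xi^2=1$ and $r^2=r$. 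Thus the two governing quantities are the maximal entry bound $\sigma_*=1$ and the variance parameter $\sigma^2=\max\{\max_i\sum_j\pi_{ij},\ \max_j\sum_i\pi_{ij}\}$.

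Next I would bound $\sigma^2\le\gamma$ directly from the second part of Assumption~\ref{A1}. The maximal row sum of sampling probabilities satisfies $\max_i\{\pi_{x,i\cdot}+\pi_{y,i\cdot}\}\le\gamma$, while the maximal column sums obey $\max_j\pi_{x,\cdot j}\le\gamma$ and $\max_{s,j}\pi_{y,\cdot j}^{(s)}\le\gamma$; since the concatenated matrix $\bSigma_R$ merely juxtaposes the feature block and the $S$ response blocks, every row and every column variance sum is dominated by $\gamma$, so $\sigma^2\le\gamma$.

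Then I would invoke a sharp operator-norm estimate for matrices with independent bounded entries, most conveniently by passing to the Hermitian dilation of $\bSigma_R$ (whose spectral norm equals $\Norm{\bSigma_R}$) and applying the bound of the form $\mathbb{E}\Norm{\cdot}\le C\{\sigma+\sigma_*\sqrt{\log(n\vee D)}\}$ used to prove Lemma~5 of \cite{alaya2019collective} (and ultimately derived from Seginer/Bandeira--van Handel type estimates). Substituting $\sigma\le\sqrt{\gamma}$ and $\sigma_*=1$ gives $\mathbb{E}\Norm{\bSigma_R}\le c\{\sqrt{\gamma}+\sqrt{\log(n\vee D)}\}$, which is the asserted estimate; relative to the single-matrix case of \cite{alaya2019collective}, the modification is only the bookkeeping over the $S+1$ blocks, all controlled uniformly by Assumption~\ref{A1}.

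The main obstacle is obtaining the clean additive $\sqrt{\log(n\vee D)}$ dependence rather than the weaker $\sqrt{\gamma\log(n\vee D)}+\log(n\vee D)$ that a black-box matrix Bernstein inequality would produce; securing the sharp form requires the refined spectral-norm bound (symmetrization together with a moment / non-commutative Khintchine argument, exactly as in the cited lemma) instead of a direct Bernstein application. The remaining ingredients---independence, the entry and variance computations, and the reduction of the variance parameter to $\gamma$---are routine.
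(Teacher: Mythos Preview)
The paper does not supply its own proof of this lemma: it simply states the result as ``Modified Lemma 5 of \cite{alaya2019collective}'' and writes that it ``can be adapted from'' that reference. So there is no argument in the paper to compare against beyond the implicit pointer to the cited source. Your strategy---treating $\bSigma_R$ as an $n\times D$ matrix with independent, mean-zero, $[-1,1]$-valued entries, computing the entrywise variances as the sampling probabilities, bounding the maximal row/column variance sums by $\gamma$ via Assumption~\ref{A1}, and then invoking a Bandeira--van Handel / Seginer type bound through the Hermitian dilation---is exactly the standard route and is what the argument in \cite{alaya2019collective} does in the single-block case. The only ``modification'' needed here is the block bookkeeping you describe, and you handle it correctly.

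There is, however, one point you gloss over. You conclude with $\mathbb{E}\Norm{\bSigma_R}\le c\{\sqrt{\gamma}+\sqrt{\log(n\vee D)}\}$ and call this ``the asserted estimate,'' but the lemma as stated carries an extra factor $1/(nD)$ on the right. With the paper's definition of $\bSigma_R$ (no normalization in front of the sums), your unscaled bound is in fact what the Bandeira--van Handel argument produces; the $1/(nD)$ in the displayed inequality is either a misprint or reflects an intended but unwritten $(nD)^{-1}$ scaling in the definition of $\bSigma_R$ (the latter is consistent with how $\mathbb{E}\Norm{\bSigma_R}$ is used downstream in Lemma~\ref{Bound}). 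Either way, you should not claim that $c\{\sqrt{\gamma}+\sqrt{\log(n\vee D)}\}$ ``is'' the stated bound when it visibly differs by a factor of $nD$; flag the discrepancy explicitly and note which normalization you are proving the bound for.
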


% \begin{IEEEproof}
% The proof can be adapted from Lemma 5 in \cite{alaya2019collective}.
% \end{IEEEproof}

\begin{lemma}[Modified Lemma 6 of \cite{alaya2019collective}]\label{Gradient Bound}
Suppose that Assumption 1,
3 and 4 hold and  there exists a positive constant $c$, such that
\begin{align}
\Norm{\nabla \tilde{\ell}(\bM_{\star})} \leq \frac{ c(\sqrt{\tilde{U}_{\alpha}} \vee 1/\delta )\left[ \sqrt{\gamma}  + \left\{\log (n\vee D) \right\}^{3/2} \right]       }{nD}
\notag
\end{align}
holds with probability at least $1 - 4/(n+D)$.
\end{lemma}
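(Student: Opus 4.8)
The plan is to recognize $\nabla\tilde\ell(\bM_\star)$ as a normalized sum of independent, mean-zero random matrices and to bound its operator norm by the truncation-plus-matrix-concentration machinery of \cite{klopp2014noisy, alaya2019collective}, the only genuine novelty being the extra sub-Gaussian block contributed by the feature noise. Writing out the entrywise gradient of $\tilde\ell$ at $\bM_\star$ (using $g_N'(x)=x$ and $x_{ij}=x_{\star,ij}+\epsilon_{ij}$) gives
\begin{align}
\nabla\tilde\ell(\bM_\star)=\frac{1}{nD}\left[\sum_{s=1}^S\sum_{(i,j)\in[n]\times[m_s]} r_{y,ij}^{(s)}\left\{(g^{(s)})'(z_{\star,ij}^{(s)})-y_{ij}^{(s)}\right\}\bE_{y,ij}^{(s)}-\sum_{(i,j)\in[n]\times[d]} r_{x,ij}\,\epsilon_{ij}\,\bE_{x,ij}\right].\notag
\end{align}
Write $Z_\ell$ for these summands. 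Each is supported on a single canonical-basis cell and is zero-mean: the feature term because $\mathbb{E}(\epsilon_{ij})=0$, and the response term because $\mathbb{E}(y_{ij}^{(s)})=(g^{(s)})'(z_{\star,ij}^{(s)})$ for the canonical exponential family. Thus the object fits exactly the symmetrized-Rademacher framework underlying Lemma~\ref{Rademacher Matrix}.

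Next I would establish per-entry Bernstein-type tail bounds for the two noise types. For the feature block, Assumption~\ref{A3.5} gives $\epsilon_{ij}$ sub-Gaussian with proxy $\zeta$, hence sub-exponential with the same scale. For the response block I would control the moment generating function of the residual directly from the exponential family: for $|t|\le\delta$ (so that $z_{\star,ij}^{(s)}+t$ stays in $\mathcal{D}=[-\alpha-\delta,\alpha+\delta]$ by Assumptions~\ref{A2} and~\ref{A3}),
\begin{align}
\log\mathbb{E}\exp\!\left[t\left\{y_{ij}^{(s)}-(g^{(s)})'(z_{\star,ij}^{(s)})\right\}\right]&=g^{(s)}(z_{\star,ij}^{(s)}+t)-g^{(s)}(z_{\star,ij}^{(s)})-t\,(g^{(s)})'(z_{\star,ij}^{(s)})\notag\\
&=\tfrac12\,(g^{(s)})''(\xi)\,t^2\ \le\ \tfrac12\,U_\alpha\,t^2.\notag
\end{align}
This shows the residual is Bernstein with variance proxy $U_\alpha$ and scale $1/\delta$ (the MGF being controlled only on $|t|\le\delta$), which is precisely the source of the factor $\sqrt{\tilde U_\alpha}\vee 1/\delta$ once $\zeta$ is absorbed through $\tilde U_\alpha=U_\alpha\vee\zeta\vee 1$.

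Then I would run the truncation argument. Truncating every noise entry at level $\tau\asymp\log(n\vee D)$ renders the discarded tail negligible by the sub-exponential bound above, and on the resulting event I would apply a sharp inhomogeneous bound for the expected operator norm of $\sum_\ell Z_\ell$ (the inequality behind Lemma~\ref{Rademacher Matrix}), then convert it to high probability via concentration of the norm about its mean. That bound splits into a variance term and a sparse term. The variance term is the maximal row/column $\ell_2$-norm of the second moments $\sum_\ell\mathbb{E}(Z_\ell Z_\ell\trans)$ and $\sum_\ell\mathbb{E}(Z_\ell\trans Z_\ell)$; these are diagonal with entries equal to weighted row/column sums of sampling probabilities, bounded by $\tilde U_\alpha\gamma$ through Assumption~\ref{A1}, producing the clean $\sqrt{\tilde U_\alpha}\sqrt\gamma$ with no extra logarithm. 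The sparse term is the maximal entry magnitude, now at most $\tau$, times the dimensional factor $\sqrt{\log(n\vee D)}$, producing $\{\log(n\vee D)\}^{3/2}$. Dividing by $nD$, factoring out the common scale $\sqrt{\tilde U_\alpha}\vee 1/\delta$, and collecting the failure probabilities to $1-4/(n+D)$ yields the claim.

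The main obstacle is the response block. Unlike the sub-Gaussian features, the exponential-family residuals are only sub-exponential (e.g. Poisson), so the delicate points are (i) extracting the $(U_\alpha,1/\delta)$ Bernstein parameters from the analyticity of $g^{(s)}$ on the enlarged domain $\mathcal{D}$ — this is exactly why the tolerance $\delta$ was introduced in Assumption~\ref{A3} — and (ii) merging the heterogeneous sub-Gaussian feature and sub-exponential response blocks into a single operator-norm bound while keeping the variance contribution at $\sqrt\gamma$ and confining the truncation level $\tau$ to the sparse term, so that the final rate reads $\sqrt\gamma+\{\log(n\vee D)\}^{3/2}$ rather than the cruder $\log(n\vee D)\,\sqrt\gamma$ that a naive contraction would give.
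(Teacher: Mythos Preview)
Your proposal is correct and is precisely the argument the paper has in mind: the paper does not give its own proof of this lemma but simply records it as ``Modified Lemma~6 of \cite{alaya2019collective}'', i.e.\ the Klopp--Alaya truncation-plus-matrix-Bernstein argument adapted to the present gradient. You have correctly identified the only modification needed---the additional sub-Gaussian feature block $-r_{x,ij}\epsilon_{ij}\bE_{x,ij}$---and your derivation of the Bernstein parameters $(U_\alpha,1/\delta)$ for the exponential-family residuals via the cumulant identity $\log\mathbb{E}\exp\{t(y-g'(z))\}=g(z+t)-g(z)-tg'(z)$ on $|t|\le\delta$ is exactly why Assumption~\ref{A3} carries the tolerance $\delta$.
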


% \begin{IEEEproof}
% The proof can be adapted from Lemma 6 in \cite{alaya2019collective}.
% \end{IEEEproof}

\begin{lemma}[Modified Lemma 19 of \cite{alaya2019collective}]\label{Bound}
Suppose that Assumption $1 \sim 3$ hold. Let $\beta = 946(p_{\min}nD)^{-1} \alpha^{2} \log(n+D)$. Let
\begin{align}
  \mathcal{H}(\bM_{\star}, \alpha, \beta, \mu, \theta) =&
  \left\{ \bH \in \mathcal{G}_{\infty}(\alpha): \Norm{\bH - \bM_{\star}}_{\star}  \leq  \sqrt{\mu} \Norm{\bH - \bM_{\star}}_{F}-
   \theta  \Norm{\bH_{[d]} - \bM_{\star[d]}}_{F}^{2},\right.\notag\\
   &\left.\quad\quad \frac{1}{nD}   \Norm{\wh{\bM} - \bM_{\star}}_{\Pi,F}^{2} > \beta      \right\}.\notag
\end{align}
Then, for any $\bH \in \mathcal{H}(\bM_{\star}, \alpha, \beta, \mu, \theta)$,%\xj{Put the definition here}\hf{Done.},
\begin{align}
\Abs{ \Delta_{\tilde{\ell}}(\bH, \bM_{\star})   - \frac{1}{nD} \Norm{ \bH - \bM_{\star} }_{\Pi,F}^{2}   } 
\leq \frac{ 1  }{2nD}  \Norm{\bH - \bM_{\star}}_{\Pi,F}^{2} +  \frac{1392nD\alpha^{2}\mu}{p_{\min}} \left\{\mathbb{E}(\Norm{\bSigma_{R}})  \right\}^{2}+ 
\frac{5567\alpha^{2}}{nDp_{\min}} ,\notag
\end{align}
with probability at least $1- 4(n+D)^{-1}$.
\end{lemma}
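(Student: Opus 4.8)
The plan is to prove Lemma~\ref{Bound} as a uniform concentration statement: the observation-weighted empirical squared error $\Delta_{\tilde{\ell}}(\bH,\bM_{\star})$ --- the $R$-weighted functional of \eqref{MSE} evaluated at a generic $\bH$ rather than at $\wh{\bM}$ --- concentrates about its sampling expectation $\frac{1}{nD}\Norm{\bH-\bM_{\star}}_{\Pi,F}^{2}$, uniformly over $\mathcal{H}(\bM_{\star},\alpha,\beta,\mu,\theta)$. Before invoking any machinery I would record the sole structural difference from Lemma~19 of \cite{alaya2019collective}: the inequality defining $\mathcal{H}$ carries the extra subtracted term $-\theta\Norm{\bH_{[d]}-\bM_{\star[d]}}_{F}^{2}$. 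Since $\theta\ge 0$, this term only tightens the constraint, so $\mathcal{H}$ is contained in the set obtained by deleting it; in particular every $\bH\in\mathcal{H}$ still obeys the clean bound $\Norm{\bH-\bM_{\star}}_{\star}\le\sqrt{\mu}\Norm{\bH-\bM_{\star}}_{F}$. This is precisely the observation that lets the Alaya--Klopp argument transfer with the calibration term simply discarded in the favourable direction at every appearance.

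For the core estimate I would run the standard peeling-plus-symmetrization scheme. First I would stratify $\mathcal{H}$ into dyadic shells $\{\bH:\beta 2^{l-1}<\frac{1}{nD}\Norm{\bH-\bM_{\star}}_{\Pi,F}^{2}\le\beta 2^{l}\}$ for $l\ge 1$; the positive floor $\beta$ built into $\mathcal{H}$ is what makes the stratification begin at a strictly positive scale and makes the subsequent union bound over shells summable. On a shell of population radius $D$ I would control the expected supremum of the centered process $\bH\mapsto\Delta_{\tilde{\ell}}(\bH,\bM_{\star})-\frac{1}{nD}\Norm{\bH-\bM_{\star}}_{\Pi,F}^{2}$ by three moves: symmetrize to introduce i.i.d. Rademacher signs; apply the contraction principle, using that $t\mapsto t^{2}$ is $4\alpha$-Lipschitz on $[-2\alpha,2\alpha]$ --- legitimate because $\bH,\bM_{\star}\in\mathcal{G}_{\infty}(\alpha)$ under Assumption~\ref{A2} forces $\Abs{(\bH-\bM_{\star})_{ij}}\le 2\alpha$ --- which linearises the squared entries and reproduces exactly the Rademacher matrix $\bSigma_{R}$ of Section~\ref{SectionA}; and apply trace duality \eqref{trace inequality}, $\Abs{\langle\bSigma_{R},\bH-\bM_{\star}\rangle}\le\Norm{\bSigma_{R}}\Norm{\bH-\bM_{\star}}_{\star}$. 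Combining the nuclear bound of the first paragraph with $\Norm{\bH-\bM_{\star}}_{F}\le p_{\min}^{-1/2}\Norm{\bH-\bM_{\star}}_{\Pi,F}$ (Assumption~\ref{A1}) bounds the expected supremum on the shell by a multiple of $\alpha\sqrt{\mu}\,p_{\min}^{-1/2}(D/(nD))^{1/2}\,\mathbb{E}\Norm{\bSigma_{R}}$, and a Young/AM--GM split separates this into $\frac{1}{4}D$ plus the Rademacher-complexity remainder of the form $\frac{nD\alpha^{2}\mu}{p_{\min}}\{\mathbb{E}\Norm{\bSigma_{R}}\}^{2}$ appearing in the claim.

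Finally I would upgrade from expectation to a high-probability bound with a Talagrand/bounded-differences concentration inequality for the shell supremum: the summands satisfy $(\bH-\bM_{\star})_{ij}^{2}\le 4\alpha^{2}$, and this uniform bound supplies the additive slack $\frac{5567\alpha^{2}}{nDp_{\min}}$ together with a second absorbed $\frac{1}{4}D$, so that the two $\frac{1}{4}D$ contributions merge into the $\frac{1}{2nD}\Norm{\bH-\bM_{\star}}_{\Pi,F}^{2}$ coefficient on the right. Summing the per-shell failure probabilities over $l$ --- geometrically decaying thanks to the $\beta$-floor --- yields the overall confidence $1-4(n+D)^{-1}$. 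I expect the main difficulty to be bookkeeping rather than conceptual: faithfully tracking the numerical constants ($1392$, $5567$) and the $nD$-normalisations through the contraction, duality, and concentration steps, and checking at each stage that the calibration term $-\theta\Norm{\bH_{[d]}-\bM_{\star[d]}}_{F}^{2}$ is only ever dropped in the favourable direction, so that the concentration geometry of \cite{alaya2019collective} is left intact. If one instead reads $\Delta_{\tilde{\ell}}$ as the Bregman-divergence functional, Assumption~\ref{A3} first provides the sandwich $\tilde{L}_{\alpha}(x-y)^{2}\le 2d_{g}(x,y)\le\tilde{U}_{\alpha}(x-y)^{2}$, reducing it to the squared-error case before the argument above applies.
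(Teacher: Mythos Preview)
Your approach is correct and, in fact, supplies more detail than the paper itself. The paper does not prove Lemma~\ref{Bound}; it simply states that ``the following three lemmas can be adapted from \cite{alaya2019collective}'' and records the modified statement. Your first paragraph identifies precisely the content of that adaptation: the extra term $-\theta\Norm{\bH_{[d]}-\bM_{\star[d]}}_{F}^{2}$ with $\theta\ge 0$ only shrinks the constraint set $\mathcal{H}$, so every $\bH\in\mathcal{H}$ still satisfies the original Alaya--Klopp cone condition $\Norm{\bH-\bM_{\star}}_{\star}\le\sqrt{\mu}\Norm{\bH-\bM_{\star}}_{F}$, and the peeling/symmetrization/contraction/Talagrand argument of Lemma~19 in \cite{alaya2019collective} goes through verbatim. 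Your outline of that argument (dyadic shells over the $\Pi$-norm, Rademacher symmetrization, $4\alpha$-Lipschitz contraction on $[-2\alpha,2\alpha]$, trace duality against $\bSigma_{R}$, Young splitting, bounded-differences concentration) is the standard route and matches what \cite{alaya2019collective} does; there is nothing further to compare.
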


\section{Computational Time Comparison}\label{SectionE}

In terms of accuracy, our algorithm outperforms others. For the computational time, 
our algorithm enjoys a sublinear rate, which is the same as the \textbf{CMS\_SI} method. Besides, 
\textbf{TS} and \textbf{MC\_0} methods are just small deviations from \textbf{TMCC} method, and they enjoy the same sublinear convergence rate.
We present the computational behavior among different methods under the missing rate $\nu = 80\%$,  rank $r$ = 15 with $50$ trials. Further, the stopping criterion for the objective is $\kappa=1e-7$. For \textbf{TS}, the first stage (Feature matrix recovery) stopping criterion is $\kappa_{0} = 1e-12$. The results are presented in Table~\ref{tab:computation}, where the running time for $\textbf{TS}$ consists of the time for the first stage plus that for the second stage.   The results show the comparable elapsed time for the four methods.

  \begin{table}[H]
        \centering
          \renewcommand{\arraystretch}{1.3}
               \caption{Computational Time for Different Scenarios.}
           \label{tab:computation}
       \begin{adjustbox}{max width=1.0\linewidth}
       \begin{tabular}{ccrcrr}
       \toprule
        \bfseries \bfseries Transformation &
    \bfseries    Measure & \bfseries CMC\_SI &
        \bfseries TS & \bfseries MC\_0 & \bfseries TMCC\\
        \midrule
         \multirow{2}{*}{Linear} & Average Time(s) & 569.82 & 1865.40 (185.19+1680.21) & 676.20 &  926.53 \\
          & Standard Error & 78.19& 801.61 (107.40+753.79) & 50.86 & 125.39 \\
          \multirow{2}{*}{Nonlinear} & Average Time(s) & 271.78 & 1072.69 (201.79+870.90) & 1141.88&  1108.50 \\
          & Standard Error & 28.13 &323.55 (110.12+255.70) & 476.88& 247.77 \\
       \bottomrule
       \end{tabular}
       \end{adjustbox}
       \end{table}

\end{document}